\icmltitlerunning{Poisson Learning}
\newcommand{\uAve}{\bar{y}_{\mathrm{w}}}
\newcommand{\cons}{\bar{y}}
\newcommand{\bcons}{\bar{\mathbf{y}}}
\newcommand{\dd}{\mathrm{d}}
\newcommand{\bbR}{\mathbb{R}}
\newcommand{\cE}{\mathcal{E}}
\newcommand{\Ninner}{N_{\mathrm{inner}}}
\newcommand{\Nouter}{N_{\mathrm{outer}}}
\newcommand{\GL}{\mathrm{GL}}
\definecolor{mygreen}{rgb}{0.1,0.75,0.2}
\def\lp{\left(}
\def\rp{\right)}
\def\la{\left|}
\def\ra{\right|}
\def\lda{\left\|}
\def\rda{\right\|}
\def\lb{\left\{}
\def\rd{\right.}
\definecolor{darkred}{rgb}{0.6,0.1,0.1}
\definecolor{darkblue}{rgb}{0.1,0.1,0.6}
\newcommand{\diag}{\text{diag}}
\newcommand{\proj}{\text{Proj}}
\newcommand{\bproj}{\textbf{Proj}}
\newcommand{\one}{\mathds{1}}
\newcommand{\zero}{\mathbf{0}}
\newcommand{\dist}{\text{dist}}
\DeclareMathOperator*{\argmax}{arg\,max}
\renewcommand{\div}{\textnormal{div}\,}
\newcommand{\M}{{\mathcal M}}
\renewcommand{\L}{{\mathcal L}}
\newcommand{\bp}{{\mathbf p}}
\newcommand{\bs}{{\mathbf s}}
\renewcommand{\P}{{\mathbb P}}
\newcommand{\E}{{\mathbb E}}
\renewcommand{\O}{\Gamma}
\renewcommand{\bar}[1]{{\overline{#1}}}
\newcommand{\e}{{\mathbf e}}
\newcommand{\bW}{{\mathbf W}}
\newcommand{\bL}{{\mathbf L}}
\newcommand{\bD}{{\mathbf D}}
\newcommand{\bF}{{\mathbf F}}
\newcommand{\bB}{{\mathbf B}}
\newcommand{\bU}{{\mathbf U}}
\newcommand{\bb}{{\mathbf b}}
\newcommand{\R}{\mathbb{R}}
\newcommand{\eps}{\varepsilon}
\renewcommand{\hat}[1]{\widehat{#1}}
\renewcommand{\tilde}[1]{\widetilde{#1}}
\renewcommand{\phi}{\varphi}
\newcommand{\Vol}{\mathrm{Vol}}
\newcommand{\lpxaz}{\ell^p_{a,0}(X)}
\newcommand{\lx}{\ell^2(X)}
\newcommand{\lonex}{\ell^1(X)}
\newcommand{\lxz}{\ell^2_0(X)}
\newcommand{\lxs}{\ell^2(X^2)}
\newcommand{\lpx}{\ell^p(X)}
\newcommand{\lqx}{\ell^q(X)}
\newcommand{\lpxz}{\ell^p_0(X)}
\newcommand{\lpxs}{\ell^p(X^2)}
\newtheorem{theorem}{Theorem}
\newtheorem{lemma}[theorem]{Lemma}
\newtheorem{proposition}[theorem]{Proposition}
\theoremstyle{definition}
\newtheorem{remark}[theorem]{Remark}
\newtheorem{conjecture}[theorem]{Conjecture}
\numberwithin{theorem}{section}
\numberwithin{equation}{section}
\begin{document}

\twocolumn[
\icmltitle{Poisson Learning: Graph Based Semi-Supervised Learning \texorpdfstring{\\}{} At Very Low Label Rates}



\icmlsetsymbol{equal}{*}

\begin{icmlauthorlist}
\icmlauthor{Jeff Calder}{min}
\icmlauthor{Brendan Cook}{min}
\icmlauthor{Matthew Thorpe}{man}
\icmlauthor{Dejan Slep\v{c}ev}{cmu}
\end{icmlauthorlist}

\icmlaffiliation{min}{School of Mathematics, University of Minnesota, Minneapolis, USA.}
\icmlaffiliation{man}{Department of Mathematics, University of Manchester, Manchester, UK.}
\icmlaffiliation{cmu}{Department of Mathematical Sciences, Carnegie Mellon University, Pittsburgh, USA.}

\icmlcorrespondingauthor{Jeff Calder}{jwcalder@umn.edu}

\icmlkeywords{Semi-Supervised Learning, Graph Laplacian, Poisson Learning, Laplacian Learning, MBO Scheme}

\vskip 0.3in
]



\begin{NoHyper}
\printAffiliationsAndNotice{}  
\end{NoHyper}
\begin{abstract}
We propose a new framework, called \emph{Poisson learning}, for graph based semi-supervised learning at very low label rates. Poisson learning is motivated by the need to address the degeneracy of Laplacian semi-supervised learning in this regime. The method replaces the assignment of label values at training points with the placement of sources and sinks, and solves the resulting Poisson equation on the graph. The outcomes are  provably  more stable and informative than those of Laplacian learning. Poisson learning is efficient and simple to implement, and we present numerical experiments showing the method is superior to other recent approaches to semi-supervised learning at low label rates on MNIST, FashionMNIST, and Cifar-10. We also propose a graph-cut enhancement of Poisson learning, called \emph{Poisson MBO}, that gives higher accuracy and can incorporate prior knowledge of relative class sizes.
\end{abstract}

\section{Introduction} \label{sec:intro}
Semi-supervised learning uses both labeled and unlabeled data in learning tasks. For problems where very few labels are available, geometric or topological structure in unlabeled data can be used to greatly improve the performance of classification, regression, or clustering algorithms. One of the most widely used methods in graph-based semi-supervised learning is Laplace learning, originally proposed in \cite{zhu2003semi}, which seeks a graph harmonic function that extends the labels. Laplace learning, and variants thereof, have been widely applied in semi-supervised learning~\cite{zhou2005learning,zhou2004learning,zhou2004ranking,ando2007learning} and manifold ranking~\cite{he2004manifold,yang2013saliency,xu2011efficient}, among many other problems. 

This paper is concerned with graph-based semi-supervised learning at very low label rates. In this setting, it has been observed that Laplace learning can give very poor classification results~\cite{nadler2009semi,el2016asymptotic}.  The poor results are often attributed to the fact that the solutions develop localized spikes near the labeled points and are almost constant far from the labeled points. In particular, label values are not propagated well by the Laplacian learning approach. To address this issue, recent work has suggested to consider $p$-Laplace learning~\cite{el2016asymptotic}.  Several works have rigorously studied $p$-Laplace regularization with few labels~\cite{slepcev2019analysis,calder2018game,calder2017consistency}, and recent numerical results show that $p>2$ is superior to Laplace learning at low label rates~\cite{flores2019algorithms}. The case of $p=\infty$ is called Lipschitz learning \cite{kyng2015algorithms}, which seeks the absolutely minimal Lipschitz extension of the training data.  Other methods to address low label rate problems include  higher order Laplacian regularization \cite{zhou2011semi} and spectral cutoffs \cite{Belkin2002UsingMS}.

While $p$-Laplace learning improves upon Laplace learning, the method is more computationally burdensome than Laplace learning, since the optimality conditions are nonlinear. Other recent approaches have aimed to re-weight the graph more heavily near labels, in order to give them wider influence when the labeling rate is very low. One way to re-weight the graph is the Weighted Nonlocal Laplacian (WNLL) \cite{shi2017weighted}, which amplifies the weights of edges directly connected to labeled nodes. The WNLL achieves better results at moderately low label rates, but still performs poorly at very low label rates \cite{flores2019algorithms}. To address this, \cite{calder2018properly} proposed the Properly Weighted Laplacian, which re-weights the graph in a way that is well-posed at arbitrarily low label rates. 

Much of the recent work on  low label rate problems has focused on formulating and implementing new learning approaches  that are well-posed with few labels.  The exact nature of the degeneracy in Laplace learning, and the question of how the tails of the spikes propagate label information, has not been studied and is still poorly understood. For some problems, the performance of Laplacian learning is good \cite{zhu2003semi},   while for other problems it is catastrophic, yielding very poor classification results similar to random guessing \cite{shi2017weighted,flores2019algorithms}. 

In this paper, we carefully analyze Laplace learning at very low label rates, and we discover that \emph{nearly all} of the degeneracy of Laplace learning is due to a large constant bias in the solution of the Laplace equation that is present only at low label rates. 
In order to overcome this problem we introduce a new algorithm, we call \emph{Poisson learning}, that gives very good classification performance down to extremely low label rates. We give a random walk interpretation of Poisson learning that shows how the method uses information from the random walkers only \emph{before} they reach the mixing time of the random walk and forget their initial condition.  We also propose a graph-cut enhancement
of Poisson learning, called \emph{Poisson MBO}, that can incorporate knowledge about class sizes, and further improves the class-label accuracy. 

The rest of the paper is organized as follows.  In Section \ref{sec:analysis} we first briefly  introduce Poisson learning, and then provide a detailed motivation for the algorithm and a theoretical analysis from both the variational and random walk perspectives. The Poisson MBO approach is presented in Section \ref{sec:poissonMBO}. In Section \ref{sec:algorithms} we present the step-by-step algorithms and discuss implementation details for the Poisson and Poisson MBO algorithms. In Section \ref{sec:num} we present numerical experiments with semi-supervised classification on the MNIST, FashionMNIST, and Cifar-10 datasets. The proofs of the results are available in the supplementary materials.

\section{Poisson learning} \label{sec:analysis}

Let $X=\{x_1,x_2,\dots,x_n\}$ denote the vertices of a graph with edge weights $w_{ij}\geq 0$ between $x_i$ and $x_j$. We assume the graph is symmetric, so $w_{ij}=w_{ji}$. We define the degree $d_i=\sum_{j=1}^n w_{ij}$. For a multi-class classification problem with $k$ classes, we let the standard basis vector $\e_i\in \R^k$ represent the $i^{\rm th}$ class. We assume the first $m$ vertices $x_1,x_2,\dots,x_m$ are given labels $y_1,y_2,\dots,y_m\in \{\e_1,\e_2,\dots,\e_k\}$, where $m < n$. The task of graph-based semi-supervised learning is to extend the labels to the rest of the vertices $x_{m+1},x_{m+2},\dots,x_{n}$. 

The well-known Laplace learning algorithm \cite{zhu2003semi} extends the labels by solving the problem
\begin{equation}\label{eq:bv}
\left.\begin{aligned}
\L u(x_i) &= 0,&&\text{if }m+1 \leq i \leq n,\\ 
u(x_i) &=y_i,&&\text{if }1 \leq i \leq m,
\end{aligned}\right\}
\end{equation}
where $\L$ is the unnormalized graph Laplacian given by 
\[\L u(x_i) = \sum_{j=1}^n w_{ij}(u(x_i) - u(x_j)).\]
Here, $u:X\to \R^k$ and we write the components of $u$ as $u(x_i)=(u_1(x_i),u_2(x_i),\dots,u_k(x_i))$. The label decision for vertex $x_i$ is determined by the largest component of $u(x_i)$
\begin{equation}\label{eq:labeldec}
\ell(x_i) = \argmax_{j\in \{1,\dots,k\}} \{u_j(x)\}.
\end{equation}
We note that Laplace learning is also called \emph{label propagation (LP)} \cite{zhu2005semi}, since the Laplace equation \eqref{eq:bv} can be solved by repeatedly replacing $u(x_i)$ with the weighted average of its neighbors, which can be viewed as dynamically propagating labels.

At very low label rates, we propose to replace the problem \eqref{eq:bv} by \emph{Poisson learning}: Let $\cons=\tfrac{1}{m}\sum_{j=1}^m y_j$ be the average label vector and let  $\delta_{ij}=1$ if $i=j$ and $\delta_{ij}=0$ if $i\neq j$. One
 computes the solution of the Poisson equation
\begin{equation}\label{eq:PoissonL}
\L u(x_i) = \sum_{j=1}^m(y_j-\cons)\delta_{ij} \ \ \ \text{for }i=1,\dots,n
\end{equation}
satisfying $\sum_{i=1}^n d_i u(x_i) = 0$.  While the label decision can be taken to be the same as \eqref{eq:labeldec}, it is also simple to account for unbalanced classes or training data with the modified label decision
\begin{equation}\label{eq:modified_labeldec}
\ell(x_i) = \argmax_{j\in \{1,\dots,k\}} \{s_j u_j(x)\},
\end{equation}
where $s_j = b_j/(\bar{y}\cdot \e_j)$ and $b_j$ is the fraction of data belonging to class $j$. We explain this label decision in Remark \ref{rem:linearity}. The Poisson equation \eqref{eq:PoissonL} can be solved efficiently with a simple iteration given in Algorithm \ref{alg:poisson}.

Technically speaking, in Laplace learning, the labels are imposed as boundary conditions in a Laplace equation, while in Poisson learning, the labels appears as a source term in a graph Poisson equation. In the sections below, we explain why Poisson learning is a good idea for problems with very few labels.  In particular, we give random walk and variational interpretations of Poisson learning, and we illustrate how Poisson learning arises as the low label rate limit of Laplace learning.

\subsection{Random walk interpretation} \label{sec:randomwalk}

We present a random walk interpretation of Poisson learning and compare to the random
walk interpretation of Laplace learning to explain its poor performance at low label rates. We note that Laplace learning works very well in practice for semi-supervised learning problems with a moderate amount of labeled data. For example, on the MNIST dataset we obtained around 95\% accuracy at 16 labels per class (0.23\% label rate). However, at very low label rates the performance is poor. At 1 label per class, we find the average performance is around 16\% accuracy. This phenomenon has been observed in other works recently \cite{nadler2009semi,el2016asymptotic}. However, a clear understanding of the issues with Laplace learning at low label rates was lacking. The clearest understanding of this phenomenon comes from the random walk interpretation, and this leads directly to the foundations for Poisson learning.

Let $x\in X$ and let $X_0^x,X_1^x,X_2^x,\dots$ be a random walk on $X$ starting at $X_0^x =x$ with transition probabilities 
\[ \P(X^x_k = x_j \, | \, X^x_{k-1} = x_i) = d_i^{-1}w_{ij}. \]
Let $u$ be the solution of the \emph{Laplace} learning problem \eqref{eq:bv}. Define the stopping time to be the first time the walk hits a label, that is
\[\tau = \inf\{k\geq 0\, : \, X^x_k \in \{x_1,x_2,\dots,x_m\}\}.\]
Let $i_\tau\leq m$ denote the index of the point $X^x_\tau$, so $X^x_\tau = x_{i_\tau}$. Then, by Doob's optimal stopping theorem, we have
\begin{equation}\label{eq:urep}
u(x) = \E[y_{i_\tau}]. 
\end{equation}
This gives the standard representation formula for the solution of  Laplace learning \eqref{eq:bv}.  The interpretation is that we release a random walker from $x$ and let it walk until it hits a labeled vertex and then record that label. We average over many random walkers to get the value of $u(x)$.  

When there are insufficiently many labels, the stopping time $\tau$ is so large that we have passed the \emph{mixing time} of the random walk, and the distribution of $X^x_\tau$ is very close to the invariant distribution of the random walk $\pi(x_i) = d_i/\sum_i d_i$. In this case,  \eqref{eq:urep} gives that
\begin{equation}\label{eq:rwexplain}
u(x) \approx \frac{\sum_{i=1}^m d_i y_i}{\sum_{j=1}^md_j} =: \uAve.
\end{equation}
Of course, the function $u$ is not exactly constant, and instead it is approximately constant with sharp spikes at the labeled vertices; see Figure \ref{fig:spikes}. Previous work has proved rigorously that Laplace learning degenerates in this way at low label rates \cite{slepcev2019analysis,calder2018game}.
\begin{figure}[t!]
\vskip 0.2in
\begin{center}
\centerline{\includegraphics[trim=550 200 500 180, clip=true,width=0.75\columnwidth]{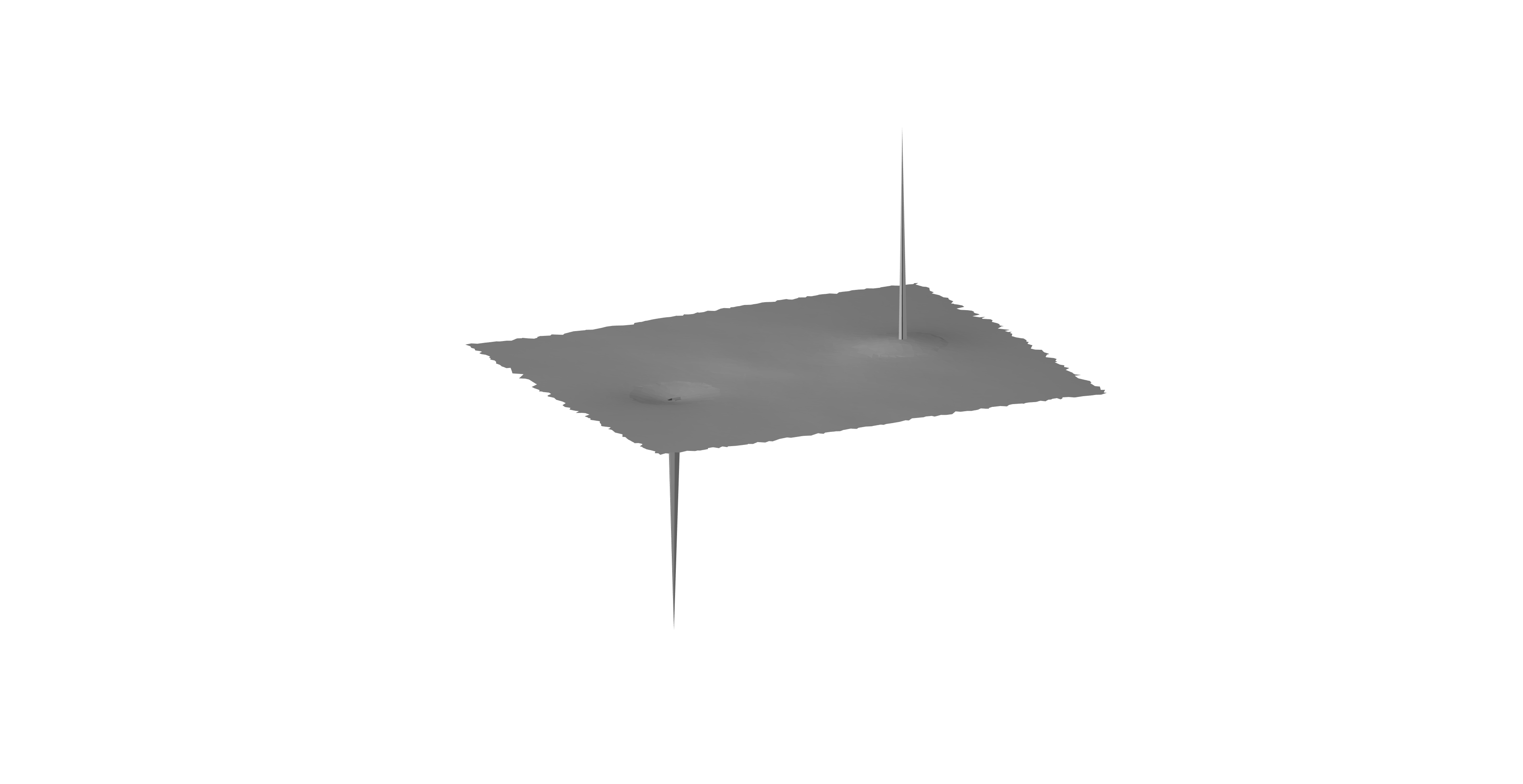}}
\caption{Demonstration of spikes in Laplace learning. The graph consists of $n=10^4$ independent uniform random variables on $[0,1]^2$ and two points are given labels of $0$ and $1$. Most values of the solution $u$ of Laplace learning are very close to $0.5$.}
\label{fig:spikes}
\end{center}
\vskip -0.2in
\end{figure}

It is important to note that the constant vector $\uAve$ depends only on the degrees of the \emph{labeled} nodes in the graph, which is very sensitive to local graph structure. When Laplace learning returns a nearly constant label function, it can be catastrophic for classification, since most datapoints are assigned the same label. This explains the 16\% accuracy in the MNIST experiment described above. 

In contrast, the Poisson equation \eqref{eq:PoissonL} has a random walk interpretation that involves the Green's function for a random walk, and is in some sense dual to Laplace learning. The source term on the right hand side of \eqref{eq:PoissonL} represents random walkers being released from labeled points and exploring the graph, while carrying their label information with them. For $T>0$ let us define
\[w_T(x_i) = \E\left[\sum_{k=0}^T \sum_{j=1}^m y_j\one_{\{X^{x_j}_k=x_i\}}\right].\]
Essentially, each time the random walk starting from $x_j$, denoted $X^{x_j}_k$, visits $x_i$ we record the label $y_j$ and sum this over all visits from $0 \leq k \leq T$. For short time $T$, this quantity is meaningful, but since the random walk is recurrent (it is a finite graph), $w_T(x_i)\to \infty$ as $T\to \infty$. If we normalize by $1/T$, we still have the same issue as with Laplace learning, where we are only measuring the invariant distribution. Indeed, we note that
\[w_T(x_i) = \sum_{k=0}^T \sum_{j=1}^m y_j\P(X^{x_j}_k=x_i)\]
\vspace{-2mm}
\[\text{and } \ \  \lim_{k\to \infty}\P(X^{x_j}_k=x_i) = \frac{d_i}{\sum_{i=1}^n d_i}.\]
Therefore, when $k$ is large we have
\[\sum_{j=1}^m y_j\P(X^{x_j}_k=x_i) \approx \frac{d_i}{\sum_{i=1}^nd_i}\sum_{j=1}^my_j,\]
and so the tail of the sum defining $w_T(x_i)$ is recording a blind average of labels. 

The discussion above suggests that we should subtract off this average tail behavior from $w_T$, so that we only record the \emph{short-time} behavior of the random walk, before the mixing time is reached. We also normalize by $d_i$, which leads us to define 
\begin{equation}\label{eq:PoissonRandomWalk}
u_T(x_i) = \E\left[\sum_{k=0}^T \frac{1}{d_i}\sum_{j=1}^m(y_j-\cons)\one_{\{X^{x_j}_k=x_i\}}\right], 
\end{equation}
where $\cons = \frac{1}{m}\sum_{j=1}^m y_j$. It turns out that as $T\to \infty$, the function $u_T(x_i)$ converges to the solution of \eqref{eq:PoissonL}.
\begin{theorem}\label{thm:PoissonRandomWalk}
For every $T\geq 0$ we have
\[ u_{T+1}(x_i)  = u_{T}(x_i) + d_i^{-1}\hspace{-1.5mm}\left(\sum_{j=1}^m (y_j-\cons)\delta_{ij}-\L u_{T}(x_i)\right). \]
If the graph $G$ is connected and the Markov chain induced by the random walk is aperiodic, then $u_T \to u$ as $T\to \infty$, where $u$ is the unique solution of the Poisson equation \eqref{eq:PoissonL} satisfying $\sum_{i=1}^n d_iu(x_i) = 0$.
\end{theorem}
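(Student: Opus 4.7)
My plan is to introduce the per-step increment
\[v_k(x_i) := \frac{1}{d_i}\sum_{j=1}^m (y_j - \cons)\,\P(X^{x_j}_k = x_i),\]
so that $u_T = \sum_{k=0}^T v_k$ and $u_{T+1}(x_i) - u_T(x_i) = v_{T+1}(x_i)$. The recursive identity in the theorem will then follow from two ingredients: a one-step recursion for $v_k$ coming from the Markov property, and a telescoping identity involving $\L$.

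For the one-step recursion, I would apply Chapman--Kolmogorov to write $\P(X^{x_j}_{k+1}=x_i) = \sum_\ell \P(X^{x_j}_k = x_\ell) P_{\ell i}$ with $P_{\ell i} = w_{\ell i}/d_\ell$, then swap the order of summation and use the symmetry $w_{\ell i} = w_{i \ell}$ to get
\[d_i\, v_{k+1}(x_i) = \sum_\ell w_{i \ell}\, v_k(x_\ell).\]
This is reversibility of the random walk in disguise; it says that the operator $v_k \mapsto v_{k+1}$ is the standard averaging operator on the graph. With this in hand, $\L v_k(x_i) = d_i v_k(x_i) - \sum_\ell w_{i\ell} v_k(x_\ell) = d_i(v_k(x_i) - v_{k+1}(x_i))$, and summing over $k=0,\dots,T$ telescopes to $\L u_T(x_i) = d_i v_0(x_i) - d_i v_{T+1}(x_i)$. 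Since $d_i v_0(x_i) = \sum_j (y_j - \cons)\delta_{ij}$, rearranging and dividing by $d_i$ yields exactly the claimed recursion.

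For the convergence statement, I would invoke the standard ergodic theorem for finite Markov chains: connectedness and aperiodicity give an irreducible aperiodic chain, so $\P(X^{x_j}_k = x_i)$ converges to $\pi(x_i) = d_i/\sum_\ell d_\ell$ exponentially fast in $k$. Because $\sum_{j=1}^m (y_j - \cons) = 0$, the limiting contribution from $\pi$ cancels, and so $v_k(x_i) \to 0$ geometrically. Thus $u_T = \sum_{k=0}^T v_k$ is absolutely summable, hence converges to some $u$. Passing to the limit in the already-proved recursion kills the $u_{T+1}-u_T$ term and gives $\L u(x_i) = \sum_j (y_j - \cons)\delta_{ij}$.

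Finally, the mean-zero condition $\sum_i d_i u(x_i)=0$ I would obtain as a conserved quantity of the iteration: summing $d_i v_k(x_i)$ over $i$ gives $\sum_j(y_j-\cons)\sum_i \P(X^{x_j}_k=x_i) = \sum_j(y_j-\cons)=0$ for each $k$, so $\sum_i d_i u_T(x_i)=0$ for every $T$, which passes to the limit. Uniqueness under this constraint is immediate because $\ker \L$ on a connected graph consists of constants. The main obstacle I anticipate is the bookkeeping in the Chapman--Kolmogorov/reversibility step: the asymmetry between the roles of $x_i$ (observed site) and $x_j$ (starting site) makes it easy to misplace degree factors, and the whole argument hinges on getting that one identity right so that the telescoping against $\L$ produces precisely the source term.
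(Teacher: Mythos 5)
Your proposal is correct, and the derivation of the recursion is essentially the paper's argument in different clothing: the paper works with the normalized Green's function $G_T(x_i,x_j)=\frac{1}{d_i}\sum_{k=0}^T\P(X_k^{x_j}=x_i)$ and derives $d_i(G_T-G_{T-1})+\L G_{T-1}=\delta_{ij}$ via the same Chapman--Kolmogorov step and symmetry $w_{\ell i}=w_{i\ell}$ that you use; your per-step increments $v_k$ are just the $k$-th terms of that Green's function paired against the centered labels, and your telescoping reproduces the same identity. Where you genuinely diverge is the convergence argument. The paper takes the solution $u$ of \eqref{eq:PoissonL} as given, sets $w_T(x_i)=d_i(u_T(x_i)-u(x_i))$, observes that $w_T$ evolves under the random-walk (distribution) iteration with $\sum_i w_T(x_i)=0$, and concludes $w_T\to\pi\cdot 0=0$ from the qualitative ergodic theorem --- no rate of convergence is needed, but existence of $u$ must be supplied separately (it follows from the solvability of $\L u=f$ for $f$ orthogonal to constants on a connected graph, or from Theorem \ref{thm:exist2}). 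You instead use the quantitative (geometric) ergodicity of a finite irreducible aperiodic chain, together with $\sum_{j=1}^m(y_j-\cons)=0$, to get $v_k\to 0$ geometrically, hence absolute summability of $\sum_k v_k$; this constructs the limit $u$ and its equation simultaneously by passing to the limit in the recursion, at the price of invoking a slightly stronger input (the exponential mixing rate rather than mere convergence to $\pi$). Both routes are valid; yours is self-contained on existence, the paper's is lighter on Markov chain theory. Your treatment of the conserved quantity $\sum_i d_i u_T(x_i)=0$ and of uniqueness via $\ker\L=\mathrm{constants}$ matches the paper's.
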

Theorem \ref{thm:PoissonRandomWalk} gives the foundation for Poisson learning through the random walk perspective, and in fact, it also gives a numerical method for computing the solution (see Algorithm \ref{alg:poisson}). 

\begin{remark}
The representation formula \eqref{eq:PoissonRandomWalk} for the solution of Poisson learning \eqref{eq:PoissonL} shows that the solution $u$ is a \emph{linear} function of the label vectors $y_1,\dots,y_m$. That is, for any $A\in \R^{k\times k}$, the solution $u_A:X\to \R^k$ of
\[\L u_A(x_i) = \sum_{j=1}^m(Ay_j-A\cons)\delta_{ij} \ \ \ \text{for }i=1,\dots,n\]
satisfying $\sum_{i=1}^n d_i u_A(x_i) =0 $ is exactly $u_A = Au$, where $u$ is the solution of \eqref{eq:PoissonL}. This shows that any reweighting of the point sources, by which we mean $y_j \mapsto A y_j$, is equivalent to reweighting the solution by $u \mapsto Au$.

If we set $A = \diag(s_1,\dots,s_k)$, then $Au$ corresponds to multiplying the point sources for class $i$ by the weight $s_i$. We can use this reweighting to account for unbalanced classes, or a discrepancy between the balancing of training and testing data, in the following way. Let $n_j$ be the number of training examples from class $j$, and let $b_j$ denote the true fraction of data in class $j$. We can choose $s_j$ so that $n_j s_j = b_j$ to ensure that the mass of the point sources for each class, weighted by $s_j$, is proportional to the true fraction of data in that class. Since $n_j$ is proportional to $\bar{y}\cdot \e_j$, this explains our modified label decision \eqref{eq:modified_labeldec}. 
\label{rem:linearity}
\end{remark}

\subsection{Variational interpretation} \label{sec:variational}

We can also interpret Poisson learning \eqref{eq:PoissonL} as a gradient regularized variational problem. Before proceeding, we briefly review some facts about calculus on graphs. Let $\lx$ denote the space of functions $u:X\to \R^k$ equipped with the inner product
\[ (u,v)_{\lx} = \sum_{i=1}^nu(x_i)\cdot v(x_i). \]
This induces a norm $\|u\|^2_{\lx} = (u,u)_{\lx}$. We also define  the space of mean-zero functions
\[ \lxz = \Big\{u \in \lx \, : \, \sum_{i=1}^n d_i u(x_i)=0\Big\}\]
We define a \emph{vector field} on the graph to be an \emph{antisymmetric} function $V:X^2\to \R^k$ (i.e., $V(x_i,x_j) = -V(x_j,x_i)$).  The \emph{gradient} of a function $u\in \lx$, denoted for simplicity as $\nabla u$, is defined to be the vector field
\[ \nabla u(x_i,x_j) = u(x_j) - u(x_i). \]
The inner product between vector fields $V$ and $W$ is
\[ (V,W)_{\lxs} = \frac{1}{2}\sum_{i,j=1}^nw_{ij}V(x_i,x_j)\cdot W(x_i,x_j). \]
and the norm of $V$ is $\|V\|^2_{\lxs} = (V,V)_{\lxs}$. 

We now consider the variational problem
\begin{equation}\label{eq:PoissonVar2}
\min_{u\in \lxz} \hspace{-3pt} \bigg\{ \frac{1}{2}\|\nabla u\|_{\lxs}^2 \hspace{-1pt} - \hspace{-1pt}  \sum_{j=1}^m (y_j - \cons)\hspace{-1.5pt}\cdot\hspace{-1.5pt}u(x_j) \bigg\}. \hspace{-3pt}
\end{equation}
 The following theorem makes the connection between Poisson learning and the variational problem \eqref{eq:PoissonVar2}.
\begin{theorem}\label{thm:exist2}
Assume $G$ is connected. Then there exists a unique minimizer $u\in \lxz$ of \eqref{eq:PoissonVar2}, and $u$ satisfies the Poisson learning equation \eqref{eq:PoissonL}.
\end{theorem}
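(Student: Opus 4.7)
The plan is to set up the variational problem as a strictly convex, coercive minimization on the Hilbert space $\lxz$, conclude existence and uniqueness from the direct method, and then derive the Euler--Lagrange equation by differentiating at the minimizer. The only nontrivial wrinkle is handling the mean-zero constraint correctly when reading off the Poisson equation.

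First I would record the discrete integration-by-parts identity
\[
(\nabla u, \nabla v)_{\lxs} = (\L u, v)_{\lx} \quad \text{for all } u,v \in \lx,
\]
which follows by expanding the definition of $\nabla$, using symmetry $w_{ij}=w_{ji}$ and relabeling the sum, and which is the bridge between the variational and PDE formulations. In particular, taking $v=u$ gives $\|\nabla u\|_{\lxs}^2 = (\L u, u)_{\lx} \geq 0$, with equality iff $u$ is constant on each connected component.

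Next I would show existence and uniqueness. Let $J(u) = \tfrac12\|\nabla u\|_{\lxs}^2 - \sum_{j=1}^m (y_j-\cons)\cdot u(x_j)$. Since $G$ is connected, $\|\nabla u\|_{\lxs}=0$ implies $u$ is constant, and the only constant vector in $\lxz$ is $0$; hence the quadratic part is strictly positive definite on $\lxz$, so $J$ is strictly convex. Coercivity on $\lxz$ follows from the discrete Poincar\'e inequality: there exists $C>0$ such that $\|u\|_{\lx}^2 \leq C\|\nabla u\|_{\lxs}^2$ for all $u \in \lxz$ (the Rayleigh quotient is bounded below by the smallest nonzero eigenvalue of $\L$ on the complement of the constants, which is positive by connectedness). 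Combined with the obvious continuity and the fact that $\lxz$ is a finite-dimensional Hilbert space, this yields a unique minimizer $u \in \lxz$.

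Finally I would derive the Euler--Lagrange equation. For any test function $v \in \lxz$ and $t\in \R$,
\[
\frac{d}{dt}\bigg|_{t=0} J(u+tv) = (\nabla u, \nabla v)_{\lxs} - \sum_{j=1}^m (y_j-\cons)\cdot v(x_j) = 0.
\]
Applying the integration-by-parts identity rewrites this as
\[
\sum_{i=1}^n \left[\L u(x_i) - \sum_{j=1}^m (y_j-\cons)\delta_{ij}\right]\cdot v(x_i) = 0 \quad \text{for all } v \in \lxz.
\]
Here is the step that needs care, and which I expect to be the main obstacle: we have only tested against mean-zero functions, so we can only conclude that the bracketed quantity is $\lx$-orthogonal to $\lxz$. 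Since the orthogonal complement of $\lxz$ in $\lx$ is spanned by the degree vector $i \mapsto d_i$, there exists $c\in \R^k$ with
\[
\L u(x_i) - \sum_{j=1}^m(y_j-\cons)\delta_{ij} = c\, d_i \quad \text{for all } i.
\]
Summing over $i$, the left-hand side vanishes because $\sum_i \L u(x_i) = 0$ (each edge contributes $\pm w_{ij}(u(x_i)-u(x_j))$ in equal and opposite pairs) and because $\sum_{j=1}^m (y_j-\cons) = 0$ by definition of $\cons$. Thus $c\sum_i d_i = 0$, forcing $c=0$, and $u$ satisfies \eqref{eq:PoissonL}.
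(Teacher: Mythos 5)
Your proof is correct and follows essentially the same strategy as the paper, which proves a more general $\ell^p$ version (Theorem A.2 in the supplement) via the direct method with a graph Poincar\'e inequality and then reads off the Euler--Lagrange equation from a first variation restricted to the mean-zero space. The only cosmetic differences are that you dispose of the Lagrange-multiplier term by identifying the orthogonal complement of $\lxz$ as $\{x_i\mapsto c\,d_i\}$ and summing, where the paper instead substitutes the test function $v(x_i)=d_i^{-1}\bigl(\L u(x_i)-\sum_j(y_j-\cons)\delta_{ij}\bigr)$, and that you get uniqueness from strict convexity, where the paper's general-$p$ argument invokes a quantitative monotonicity estimate that reduces to the same thing at $p=2$.
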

Theorem \ref{thm:exist2} shows that the Poisson learning equation \eqref{eq:PoissonL} arises as the necessary conditions for a gradient regularized variational problem with an affine loss function. We contrast this with the solution of Laplace learning \eqref{eq:bv}, which is the minimizer of the variational problem
\begin{equation}\label{eq:Laplace}
\min_{u\in \lx}\left\{\|\nabla u\|_{\lxs}^2 \, : \, u(x_i)=y_i, \, 1\leq i\leq m\right\}.
\end{equation}
Thus, while both Laplace and Poisson learning are gradient regularized variational problems, the key difference is how each algorithm handles the labeled data; Laplace learning enforces hard label constraints while Poisson learning adds an affine loss function to the energy. Of course, many variants of Laplace learning have been proposed with various types of soft label constraints in place of hard constraints. These variations perform similarly poorly to Laplace learning at low label rates, and the key feature of Poisson learning is the \emph{affine} loss function that can be easily centered. 

We note that it would be natural to add a weight $\mu>0$ to one of the terms in \eqref{eq:PoissonVar2} to trade-off the importance of the two terms in the variational problem. However, as we show in the supplementary material (see Lemma A.3), this weight would have no effect on the final label decision. We also mention that the variational problem \eqref{eq:PoissonVar2} has a natural $\ell^p$ generalization that we also explore in the supplementary material (Section A.2).

\subsection{Laplace learning at low label rates} \label{sec:motivation}

Finally, to further motivate Poisson learning, we connect Poisson learning with the limit of Laplace learning at very low label rates. At low label rates, the solution of Laplace learning \eqref{eq:bv} concentrates around a constant $\uAve$, for which we gave an interpretation of via random walks in Section \ref{sec:randomwalk}. Near labeled nodes, $u$ has sharp spikes (recall Figure~\ref{fig:spikes}) in order to attain the labels. From the variational perspective, the constant function has zero cost, and the cost of spikes is very small, so this configuration is less expensive than continuously attaining the labels $u(x_i)=y_i$. 

A natural question concerns whether the spikes in Laplace learning contain useful information, or whether they are too localized and do not propagate information well. To test this, we changed the label decision \eqref{eq:labeldec} in the MNIST experiment described in Section \ref{sec:randomwalk} to subtract off the tail constant $\uAve$ identified in \eqref{eq:rwexplain}
\[ \ell(x_i) = \argmax_{j\in \{1,\dots,k\}} \{u_j(x) - \uAve \cdot \e_j\}. \]
where $\uAve$ is defined in \eqref{eq:rwexplain}. Thus, we are centering the function $u$ at zero. At 1 label per class (10 labeled images total), the accuracy improved from 16\% to 85.9\%! Hence, the difference $u-\uAve$ contains enough information to make informed classification decisions, and therefore the spikes contain useful information.

This indicates that much of the poor performance of Laplace learning can be explained by a large shift bias that occurs only at very low label rates. Fixing this seems as simple as applying the appropriate shift before making a decision on a label, but this does not lead to a well-grounded method, since the shifted function $u - \uAve$ is exactly the graph-harmonic extension of the shifted labels $y_j-\uAve$. Why should we have to use a harmonic extension of the \emph{wrong labels} in order to achieve a better result? On the other hand, Poisson learning, which we introduced above, provides an intuitive and  well-grounded way of fixing the shift bias in Laplace learning.

To see the connection to Poisson learning, let us assume the solution $u$ of the Laplace learning equation \eqref{eq:bv} is nearly equal to the constant vector $\uAve\in \R^k$ from \eqref{eq:rwexplain} at all unlabeled points $x_{m+1},\dots,x_n$. For any \emph{labeled} node $x_i$ with $i=1,\dots,m$ we can compute (assuming $w_{ij} = 0$ for all $j\in \{1,\dots,m\}$) that
\begin{align*}
\L u(x_i)&=\sum_{j=1}^nw_{ij}(u(x_i)-u(x_j))\\
&\approx \sum_{j=m+1}^nw_{ij}(y_i-\uAve)= d_i(y_i-\uAve).
\end{align*}
Since $\L u(x_i)=0$ for $i=m+1,\dots,n$, we find that $u$ approximately satisfies the Poisson equation
\begin{equation}\label{eq:Poissonw}
\L u(x_i) = \sum_{j=1}^md_j(y_j-\uAve)\delta_{ij} \ \ \ \text{for }i=1,\dots,n.
\end{equation}
This gives a connection, at a heuristic level, been Laplace equations with hard constraints, and Poisson equations with point sources, for problems with very low label rates. We note that since constant functions are in the kernel of $\L$, $u-\uAve$ also satisfies \eqref{eq:Poissonw}. We also note that the labels, and the constant $\uAve$, are weighted by the degree $d_j$, which does not appear in our Poisson learning equation \eqref{eq:PoissonL}. We have found that both models give good results, but that \eqref{eq:PoissonL} works slightly better, which is likely due to the rigorous foundation of \eqref{eq:PoissonL} via random walks.

\subsection{The Poisson MBO algorithm} \label{sec:poissonMBO}

Poisson learning provides a robust method for propagating label information that is stable at very low label rates. After applying Poisson learning to propagate labels, we propose a graph-cut method to incrementally adjust the decision boundary so as to improve the label accuracy and account for prior knowledge of class sizes. The graph-cut method we propose is to apply several steps of gradient descent on the graph-cut problem
\begin{equation}\label{eq:graphcut}
\min_{\substack{u:X\to S_k \\ (u)_X=b}}\hspace{-1mm}\bigg\{ \frac{1}{2}\|\nabla u\|^2_{\lxs} -\mu \sum_{j=1}^m (y_j - \cons)\hspace{-0.5mm}\cdot\hspace{-0.5mm} u(x_j)  \bigg\},
\end{equation}
where $S_k = \{\e_1,\e_2,\dots,\e_k\}$, $b\in \R^k$ is given, and $(u)_X := \frac{1}{n}\sum_{i=1}^n u(x_i)$.  Since we are restricting $u(x_i)\in S_k$, the term $\tfrac{1}{2}\|\nabla u\|^2_{\lxs}$ is exactly the graph-cut energy of the classification given by $u$. Likewise, the components of the average $(u)_X$ represent the fraction of points assigned to each class. The constraint $(u)_X=b$ therefore allows us to incorporate prior knowledge about relative class sizes through the vector $b\in \R^k$, which should have positive entries and sum to one. If there exists $u:X\to S_k$ with $(u)_X=b$, then \eqref{eq:graphcut} admits a solution, which in general may not be unique. 

On its own, the graph-cut problem \eqref{eq:graphcut} can admit many local minimizers that would yield poor classification results. The phenomenon is similar to the degeneracy in Laplace learning at low label rates, since it is very inexpensive to violate any of the label constraints.  Our overall plan is to first use Poisson learning to robustly propagate the labels, and then project onto the constraint set for \eqref{eq:graphcut} and perform several steps of gradient-descent on \eqref{eq:graphcut} to improve the classification accuracy. While Poisson learning propagates the labels in a robust way, the cut energy is more suitable for locating the exact decision boundary. 

To relax the discrete problem \eqref{eq:graphcut}, we approximate the graph-cut energy with the Ginzburg-Landau approximation
\vspace{-2mm}
\begin{equation}\label{eq:GLVar}
\min_{\substack{u\in \lx\\ (u)_X=b}}\Big\{ \GL_\tau(u) -\mu \sum_{j=1}^m (y_j - \cons)\cdot u(x_j)  \Big\},
\end{equation}
\vspace{-2mm}
where
\[ \GL_\tau(u) = \frac{1}{2}\|\nabla u\|_{\lxs}^2 + \frac{1}{\tau}\sum_{i=1}^n\prod_{j=1}^k |u(x_i)-\e_j|^2. \]
The Ginzburg-Landau approximation allows $u\in \lx$ to take on any real values, instead of discrete values $u\in S_k$, making the approximation \eqref{eq:GLVar} easier to solve computationally. The graph Ginzburg-Landau approximation $\GL_\tau$ has been used previously for graph-based semi-supervised learning in \cite{garcia2014multiclass}, and other works have rigorously studied how $\GL_\tau$ approximates graph-cut energies in the scalar setting \cite{van2012gamma}. Here, we extend the results to the vector multi-class setting.
\begin{theorem}\label{thm:GLconv}
Assume $G$ is connected. Let $b\in \R^k$ and assume there exists $u:X\to S_k$ with $(u)_X=b$. For each $\tau>0$ let $u_\tau$ be any solution of \eqref{eq:GLVar}.  Then, the sequence $(u_\tau)_\tau$ is precompact in $\lx$ and any convergent subsequence $u_{\tau_m}$ converges to a solution of the graph-cut problem \eqref{eq:graphcut} as $\tau_m\to 0$. Furthermore, if the solution $u_0:X\to S_k$ of \eqref{eq:graphcut} is unique, then $u_\tau \to u_0$ as $\tau \to 0$.
\end{theorem}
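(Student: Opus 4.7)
The plan is to carry out a standard $\Gamma$-convergence argument, adapted to the finite-dimensional setting where compactness and continuity come for free once uniform bounds are established. Write $E_\tau(u) = \GL_\tau(u) - \mu \sum_{j=1}^m (y_j - \cons)\cdot u(x_j)$ and $E_0(u) = \tfrac{1}{2}\|\nabla u\|_{\lxs}^2 - \mu\sum_{j=1}^m (y_j-\cons)\cdot u(x_j)$ for $u:X\to S_k$.

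First I would obtain a uniform energy bound on $u_\tau$ by testing against the hypothesized competitor $v:X\to S_k$ with $(v)_X=b$. Since $v$ takes values in $S_k$, the double-well term $\tfrac{1}{\tau}\sum_i \prod_j|v(x_i)-\e_j|^2$ vanishes, so $E_\tau(v)=E_0(v)$ is a fixed finite quantity independent of $\tau$. Minimality gives $E_\tau(u_\tau)\le E_0(v)$, and since the linear term $\mu\sum_j(y_j-\cons)\cdot u_\tau(x_j)$ is controlled by $\|u_\tau\|_{\lx}$, which in turn is controlled by the gradient term and the constraint $(u_\tau)_X=b$ (through a graph Poincaré inequality on the connected graph $G$), one obtains bounds $\|\nabla u_\tau\|_{\lxs}^2\le C$ and $\tfrac{1}{\tau}\sum_i\prod_j|u_\tau(x_i)-\e_j|^2\le C$ uniformly in $\tau$.

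From these bounds, precompactness is immediate since $\lx$ is finite dimensional: $\|u_\tau\|_{\lx}$ is bounded, so we can extract a subsequence $u_{\tau_m}\to u_0$ in $\lx$. The double-well bound forces $\sum_i\prod_j|u_0(x_i)-\e_j|^2=0$, hence $u_0(x_i)\in S_k$ for every $i$. The constraint $(u_0)_X=b$ passes to the limit by continuity. To verify $u_0$ minimizes \eqref{eq:graphcut}, take any competitor $v:X\to S_k$ with $(v)_X=b$; then $E_{\tau_m}(v)=E_0(v)$, so minimality gives $E_{\tau_m}(u_{\tau_m})\le E_0(v)$. Passing to the limit using $\liminf_{\tau_m\to 0}\tfrac{1}{\tau_m}\sum_i\prod_j|u_{\tau_m}(x_i)-\e_j|^2\ge 0$ and continuity of $u\mapsto \tfrac{1}{2}\|\nabla u\|_{\lxs}^2 - \mu\sum_j(y_j-\cons)\cdot u(x_j)$ on $\lx$ yields $E_0(u_0)\le E_0(v)$, so $u_0$ solves \eqref{eq:graphcut}.

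The uniqueness clause follows by a standard subsequence argument: if $u_\tau\not\to u_0$, one could extract a subsequence staying outside a fixed neighborhood of $u_0$, but the above shows every subsequence has a sub-subsequence converging to \emph{some} solution of \eqref{eq:graphcut}, and by the uniqueness hypothesis that solution must be $u_0$, a contradiction. The main obstacle I anticipate is bookkeeping the Poincaré-type estimate that converts the gradient bound plus the mean constraint $(u_\tau)_X=b$ into an $\lx$ bound, in order to close the initial uniform-energy argument; everything else is the routine $\Gamma$-liminf/recovery-sequence pattern, made trivial here because the target lies in the discrete set $S_k$, so the competitor itself serves as its own recovery sequence.
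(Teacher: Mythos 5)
Your proposal is correct and follows essentially the same route as the paper's proof: a uniform energy bound obtained by testing against the $S_k$-valued competitor, combined with the graph Poincar\'e inequality, yields precompactness, and the trivial recovery sequence (the competitor itself) together with the liminf inequality for the nonnegative double-well term identifies every cluster point as a minimizer of \eqref{eq:graphcut}. The only minor difference is that you spell out the subsequence argument for the uniqueness clause, which the paper leaves implicit.
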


Theorem \ref{thm:GLconv} indicates that we can replace the graph-cut energy \eqref{eq:graphcut} with the simpler Ginzburg-Landau approximation \eqref{eq:GLVar}. To descend on the energy \eqref{eq:GLVar}, we use a time-spitting scheme that alternates gradient descent on 
\[ E_1(u):=\frac{1}{2}\|\nabla u\|_{\lxs}^2 - \mu \sum_{j=1}^m (y_j - \cons)\cdot u(x_j), \]
\vspace{-1mm}
\[\text{and }\ E_2(u):=\frac{1}{\tau}\sum_{i=1}^n\prod_{j=1}^k |u(x_i)-\e_j|^2.\]
The first term $E_1$ is exactly the energy for Poisson learning \eqref{eq:PoissonVar2}, and gradient descent amounts to the iteration
\begin{equation*}
u^{t+1}(x_i)  =  u^t(x_i) - \dd t  \bigg(\L u^t(x_i)  -  \mu  \sum_{j=1}^m (y_j - \cons) \delta_{ij}  \bigg).
\end{equation*}
We note that $\L u$  and the source term above both have zero mean value. Hence, the gradient descent equation for $E_1$ is \emph{volume preserving}, i.e., $(u^{t+1})_X = (u^t)_X$. This would not be true for other fidelity terms, such as an $\ell^2$ fidelity, and this volume conservation property plays an important role in ensuring the class size constraint $(u)_X=b$ in \eqref{eq:GLVar}. 

Gradient descent on the second term $E_2$, when $\tau>0$ is small, amounts to projecting each $u(x_i)\in \R^k$ to the closest label vector $\e_j\in S_k$, while preserving the volume constraint $(u)_X=b$. We approximate this by the following procedure: Let $\proj_{S_k}:\R^k\to S_k$ be the closest point projection, let $s_1,\dots,s_k>0$ be positive weights, and set
\begin{equation}\label{eq:}
u^{t+1}(x_i) = \text{Proj}_{S_k} (\diag(s_1,\dots,s_k) u^t(x_i)),
\end{equation}
where $\diag(s_1,\dots,s_k)$ is the diagonal matrix with diagonal entries $s_1,\dots,s_k$. We use a simple gradient descent scheme to choose the weights $s_1,\dots,s_k>0$ so that the volume constraint $(u^{t+1})_X = b$ holds (see Steps 9-14 in Algorithm \ref{alg:poissonMBO}). By Remark \ref{rem:linearity}, this procedure can be viewed as reweighting the point sources in the Poisson equation \eqref{eq:PoissonL} so that the volume constraint holds. In particular, increasing or decreasing $s_i$ grows or shrinks the size of class $i$.

We note that the work of \cite{jacobs2018auction} provides an alternative way to enforce explicit class balance constraints with a volume constrained MBO method based on auction dynamics. Their method uses a graph-cut based approach with a Voronoi-cell based initialization.

\vspace{-2mm}
\section{Poisson learning algorithms} \label{sec:algorithms}

We now present our proposed Poisson learning algorithms.  The Python source code and simulation environment for reproducing our results is available online.\footnote{\scriptsize Source Code: \url{https://github.com/jwcalder/GraphLearning}}

We let $\bW=(w_{ij})_{i,j=1}^n$ denote our symmetric weight matrix. We treat all vectors as column vectors, and we let $\one$ and $\zero$ denote the all-ones and all-zeros column vectors, respectively, of the appropriate size based on context.  We assume that the first $m$ data points $x_1,x_2,\dots,x_m$ are given labels $y_1,y_2,\dots,y_m\in \{\e_1,\e_2,\dots,\e_k\}$, where the standard basis vector $\e_i\in \R^k$ represents the $i^{\rm th}$ class.  We encode the class labels in a $k\times m$ matrix $\bF$, whose $j^{\rm th}$ column is exactly $y_j$. Let  $\bb\in \R^k$ be the vector whose $i^{\rm th}$ entry $b_i$ is the fraction of data points belonging to class $i$. If this information is not available, we set $\bb=\frac{1}{k}\one$.  

Poisson learning is summarized in Algorithm \ref{alg:poisson}. The label decision for node $i$ is $\ell_i =\argmax_{1\leq j\leq k} \bU_{ij}$, and the reweighting in Step 11 implements the label decision \eqref{eq:modified_labeldec}. In all our results, we always set $\bb=\frac{1}{k}\one$, so Poisson learning does not use prior knowledge of class sizes (the true value for $\bb$ is used in PoissonMBO below). The complexity is $O(TE)$, where $E$ is the number of edges in the graph.  We note that before the reweighting in Step 11, the Poisson learning algorithm computes exactly the function $u_T$ defined in \eqref{eq:PoissonRandomWalk}. In view of this, there is little to be gained from running the iterations beyond the \emph{mixing time} of the random walk. This can be recorded within the loop in Steps 8-10 by adding the iteration $\bp_{t+1} = \bW\bD^{-1} \bp_{t}$, where the initial value $\bp_0$ is the vector with ones in the positions of all labeled vertices, and zeros elsewhere. Up to a constant, $\bp_t$ is the probability distribution of a random walker starting from a random labeled node after $t$ steps. Then the \emph{mixing time} stopping condition is to run the iterations until
\[\|\bp_t - \bp_\infty\|_\infty \leq \eps,\]
where $\bp_\infty=\bW \one /(\one^T \bW \one)$ is the invariant distribution. We use this stopping condition with $\eps=1/n$ in all experiments, which usually takes between 100 and 500 iterations.

\begin{algorithm}[tb]
\caption{PoissonLearning}\label{alg:poisson}
\begin{algorithmic}[1]
\STATE {\bfseries Input:} $\bW,\bF,\bb,T$
\STATE {\bfseries Output:} $\bU \in \R^{n\times k}$
\STATE $\bD \gets \text{diag}(\bW \one)$
\STATE $\bL \gets \bD - \bW$
\STATE $\bcons \gets\frac{1}{m}\bF \one$
\STATE $\bB \gets [\bF - \bcons, \textbf{zeros}(k,n-m)]$
\STATE $\bU \gets \textbf{zeros}(n,k)$
\FOR{$i=1$ {\bfseries to} $T$}
   \STATE $\bU \gets \bU +\bD^{-1}(\bB^T - \bL \bU)$
\ENDFOR
\STATE $ \bU \gets \bU \cdot \diag(\bb/\bcons)$
\end{algorithmic}
\end{algorithm}

The Poisson MBO algorithm is summarized in Algorithm~\ref{alg:poissonMBO}.  The matrices $\bD$, $\bL$ and $\bB$ are the same as in Poisson learning, and Poisson MBO requires an additional fidelity parameter $\mu$ and two parameters $\Ninner$ and $\Nouter$. In all experiments in this paper, we set $\mu=1$, $\Ninner=40$ and $\Nouter=20$. Steps 9-14 implement the volume constrained projection described in Section \ref{sec:poissonMBO}.  We set the time step as $\dd \tau=10$ and set the clipping values in Step 12 to $s_{min}=0.5$ and $s_{max}=2$. We tested on datasets with balanced classes, and on datasets with very unbalanced classes, one may wish to enlarge the interval $[s_{min},s_{max}]$.

The additional complexity of PoissonMBO on top of Poisson learning is $O(\Ninner\Nouter E)$.  On large datasets like MNIST, FashionMNIST and Cifar-10, our Poisson learning implementation in Python takes about 8 seconds to run on a standard laptop computer, and about 1 second with GPU acceleration.\footnote{We used an NVIDIA RTX-2070 GPU, and it took 3 seconds to load data to/from the GPU and 1 second to solve Poisson learning.}  The additional 20 iterations of PoissonMBO takes about 2 minutes on a laptop and 30 seconds on a GPU. These computational times do not include the time taken to construct the weight matrix.

\begin{algorithm}[tb]
\caption{PoissonMBO}\label{alg:poissonMBO}
\begin{algorithmic}[1]
\STATE {\bfseries Input:} $\bW,\bF,\bb,T,\Ninner,\Nouter,\mu>0$
\STATE {\bfseries Output:} $\bU\in \R^{n\times k}$
\STATE $\bU \gets \mu \cdot \text{PoissonLearning}(\bW,\bF,\bb,T)$
\STATE $\dd t \gets 1/\max_{1\leq i\leq n}\bD_{ii}$
\FOR{$i=1$ {\bfseries to} $\Nouter$} 
   \FOR{$j=1$ {\bfseries to} $\Ninner$} 
      \STATE $\bU \gets \bU - \dd t\,(\bL \bU - \mu \bB^T)$
   \ENDFOR
   \STATE $\bs \gets \textbf{ones}(1,k)$ 
   \FOR{$j=1$ {\bfseries to} $100$}
      \STATE $\hat{\bb} \gets \frac{1}{n}\one^T \bproj_{S_k}(\bU \cdot \diag(\bs))$
      \STATE $\bs \gets \textbf{max}(\textbf{min}(\bs + \dd \tau\,(\bb - \hat{\bb}),s_{max}),s_{min})$
   \ENDFOR
   \STATE $\bU \gets \bproj_{S_k}(\bU \cdot \diag(\bs))$
\ENDFOR
\end{algorithmic}
\end{algorithm}

\section{Experimental Results} \label{sec:num}

\begin{table*}[t]
\vspace{-3mm}
\caption{MNIST: Average accuracy scores over 100 trials with standard deviation in brackets.}
\vspace{-3mm}
\label{tab:MNIST}
\vskip 0.15in
\begin{center}
\begin{small}
\begin{sc}
\begin{tabular}{llllll}
\toprule
\# Labels per class&\textbf{1}&\textbf{2}&\textbf{3}&\textbf{4}&\textbf{5}\\
\midrule
Laplace/LP \cite{zhu2003semi}&16.1 (6.2)      &28.2 (10.3)      &42.0 (12.4)      &57.8 (12.3)      &69.5 (12.2)      \\
Nearest Neighbor&55.8 (5.1)      &65.0 (3.2)      &68.9 (3.2)      &72.1 (2.8)      &74.1 (2.4)      \\
Random Walk \cite{zhou2004lazy}&66.4 (5.3)      &76.2 (3.3)      &80.0 (2.7)      &82.8 (2.3)      &84.5 (2.0)      \\
MBO \cite{garcia2014multiclass}&19.4 (6.2)      &29.3 (6.9)      &40.2 (7.4)      &50.7 (6.0)      &59.2 (6.0)      \\
VolumeMBO \cite{jacobs2018auction}&89.9 (7.3)      &95.6 (1.9)      &96.2 (1.2)      &96.6 (0.6)      &96.7 (0.6)      \\
WNLL \cite{shi2017weighted}&55.8 (15.2)      &82.8 (7.6)      &90.5 (3.3)      &93.6 (1.5)      &94.6 (1.1)      \\
Centered Kernel \cite{Mai}&19.1 (1.9)      &24.2 (2.3)      &28.8 (3.4)      &32.6 (4.1)      &35.6 (4.6)      \\
Sparse LP \cite{jung2016semi}&14.0 (5.5)      &14.0 (4.0)      &14.5 (4.0)      &18.0 (5.9)      &16.2 (4.2)      \\
p-Laplace \cite{flores2019algorithms}&72.3 (9.1)      &86.5 (3.9)      &89.7 (1.6)      &90.3 (1.6)      &91.9 (1.0)      \\
{\bf Poisson}       &90.2 (4.0)      &93.6 (1.6)      &94.5 (1.1)      &94.9 (0.8)      &95.3 (0.7)      \\
{\bf PoissonMBO}     &{\bf 96.5 (2.6)}&{\bf 97.2 (0.1)}&{\bf 97.2 (0.1)}&{\bf 97.2 (0.1)}&{\bf 97.2 (0.1)}\\
\bottomrule
\end{tabular}
\end{sc}
\end{small}
\end{center}
\vskip -0.1in
\end{table*}

\begin{table*}[t]
\vspace{-3mm}
\caption{FashionMNIST: Average accuracy scores over 100 trials with standard deviation in brackets.}
\vspace{-3mm}
\label{tab:FashionMNIST}
\vskip 0.15in
\begin{center}
\begin{small}
\begin{sc}
\begin{tabular}{llllll}
\toprule
\# Labels per class&\textbf{1}&\textbf{2}&\textbf{3}&\textbf{4}&\textbf{5}\\
\midrule
Laplace/LP \cite{zhu2003semi}&18.4 (7.3)      &32.5 (8.2)      &44.0 (8.6)      &52.2 (6.2)      &57.9 (6.7)      \\
Nearest Neighbor&44.5 (4.2)      &50.8 (3.5)      &54.6 (3.0)      &56.6 (2.5)      &58.3 (2.4)      \\
Random Walk \cite{zhou2004lazy}&49.0 (4.4)      &55.6 (3.8)      &59.4 (3.0)      &61.6 (2.5)      &63.4 (2.5)      \\
MBO \cite{garcia2014multiclass}&15.7 (4.1)      &20.1 (4.6)      &25.7 (4.9)      &30.7 (4.9)      &34.8 (4.3)      \\
VolumeMBO \cite{jacobs2018auction}&54.7 (5.2)      &61.7 (4.4)      &66.1 (3.3)      &68.5 (2.8)      &70.1 (2.8)      \\
WNLL \cite{shi2017weighted}&44.6 (7.1)      &59.1 (4.7)      &64.7 (3.5)      &67.4 (3.3)      &70.0 (2.8)      \\
Centered Kernel \cite{Mai}&11.8 (0.4)      &13.1 (0.7)      &14.3 (0.8)      &15.2 (0.9)      &16.3 (1.1)      \\
Sparse LP \cite{jung2016semi}&14.1 (3.8)      &16.5 (2.0)      &13.7 (3.3)      &13.8 (3.3)      &16.1 (2.5)      \\
p-Laplace \cite{flores2019algorithms}&54.6 (4.0)      &57.4 (3.8)      &65.4 (2.8)      &68.0 (2.9)      &68.4 (0.5)      \\
{\bf Poisson}        &60.8 (4.6)      &66.1 (3.9)      &69.6 (2.6)      &71.2 (2.2)      &72.4 (2.3)      \\
{\bf PoissonMBO}     &{\bf 62.0 (5.7)}&{\bf 67.2 (4.8)}&{\bf 70.4 (2.9)}&{\bf 72.1 (2.5)}&{\bf 73.1 (2.7)}\\
\bottomrule
\toprule
\# Labels per class&\textbf{10}&\textbf{20}&\textbf{40}&\textbf{80}&\textbf{160}\\
\midrule
Laplace/LP \cite{zhu2003semi}&70.6 (3.1)      &76.5 (1.4)      &79.2 (0.7)      &80.9 (0.5)      &{\bf 82.3 (0.3)}\\
Nearest Neighbor&62.9 (1.7)      &66.9 (1.1)      &70.0 (0.8)      &72.5 (0.6)      &74.7 (0.4)      \\
Random Walk \cite{zhou2004lazy}&68.2 (1.6)      &72.0 (1.0)      &75.0 (0.7)      &77.4 (0.5)      &79.5 (0.3)      \\
MBO \cite{garcia2014multiclass}&52.7 (4.1)      &67.3 (2.0)      &75.7 (1.1)      &79.6 (0.7)      &81.6 (0.4)      \\
VolumeMBO \cite{jacobs2018auction}&74.4 (1.5)      &77.4 (1.0)      &79.5 (0.7)      &{\bf 81.0 (0.5)}&82.1 (0.3)      \\
WNLL \cite{shi2017weighted}&74.4 (1.6)      &77.6 (1.1)      &79.4 (0.6)      &80.6 (0.4)      &81.5 (0.3)      \\
Centered Kernel \cite{Mai}&20.6 (1.5)      &27.8 (2.3)      &37.9 (2.6)      &51.3 (3.3)      &64.3 (2.6)      \\
Sparse LP \cite{jung2016semi}&15.2 (2.5)      &15.9 (2.0)      &14.5 (1.5)      &13.8 (1.4)      &51.9 (2.1)      \\
p-Laplace \cite{flores2019algorithms}&73.0 (0.9)      &76.2 (0.8)      &78.0 (0.3)      &79.7 (0.5)      &80.9 (0.3)      \\
{\bf Poisson}        &75.2 (1.5)      &77.3 (1.1)      &78.8 (0.7)      &79.9 (0.6)      &80.7 (0.5)      \\
{\bf PoissonMBO}     &{\bf 76.1 (1.4)}&{\bf 78.2 (1.1)}&{\bf 79.5 (0.7)}&80.7 (0.6)      &81.6 (0.5)      \\
\bottomrule
\end{tabular}
\end{sc}
\end{small}
\end{center}
\vskip -0.1in
\end{table*}

\begin{table*}[t!]
\vspace{-3mm}
\caption{Cifar-10: Average accuracy scores over 100 trials with standard deviation in brackets.}
\vspace{-3mm}
\label{tab:Cifar10}
\vskip 0.15in
\begin{center}
\begin{small}
\begin{sc}
\begin{tabular}{llllll}
\toprule
\# Labels per class&\textbf{1}&\textbf{2}&\textbf{3}&\textbf{4}&\textbf{5}\\
\midrule
Laplace/LP \cite{zhu2003semi}&10.4 (1.3)      &11.0 (2.1)      &11.6 (2.7)      &12.9 (3.9)      &14.1 (5.0)      \\
Nearest Neighbor&31.4 (4.2)      &35.3 (3.9)      &37.3 (2.8)      &39.0 (2.6)      &40.3 (2.3)      \\
Random Walk \cite{zhou2004lazy}&36.4 (4.9)      &42.0 (4.4)      &45.1 (3.3)      &47.5 (2.9)      &49.0 (2.6)      \\
MBO \cite{garcia2014multiclass}&14.2 (4.1)      &19.3 (5.2)      &24.3 (5.6)      &28.5 (5.6)      &33.5 (5.7)      \\
VolumeMBO \cite{jacobs2018auction}&38.0 (7.2)      &46.4 (7.2)      &50.1 (5.7)      &53.3 (4.4)      &55.3 (3.8)      \\
WNLL \cite{shi2017weighted}&16.6 (5.2)      &26.2 (6.8)      &33.2 (7.0)      &39.0 (6.2)      &44.0 (5.5)      \\
Centered Kernel \cite{Mai}&15.4 (1.6)      &16.9 (2.0)      &18.8 (2.1)      &19.9 (2.0)      &21.7 (2.2)      \\
Sparse LP \cite{jung2016semi}&11.8 (2.4)      &12.3 (2.4)      &11.1 (3.3)      &14.4 (3.5)      &11.0 (2.9)      \\
p-Laplace \cite{flores2019algorithms}&26.0 (6.7)      &35.0 (5.4)      &42.1 (3.1)      &48.1 (2.6)      &49.7 (3.8)      \\
{\bf Poisson}        &40.7 (5.5)      &46.5 (5.1)      &49.9 (3.4)      &52.3 (3.1)      &53.8 (2.6)      \\
{\bf PoissonMBO}     &{\bf 41.8 (6.5)}&{\bf 50.2 (6.0)}&{\bf 53.5 (4.4)}&{\bf 56.5 (3.5)}&{\bf 57.9 (3.2)}\\
\bottomrule
\toprule
\# Labels per class&\textbf{10}&\textbf{20}&\textbf{40}&\textbf{80}&\textbf{160}\\
\midrule
Laplace/LP \cite{zhu2003semi}&21.8 (7.4)      &38.6 (8.2)      &54.8 (4.4)      &62.7 (1.4)      &66.6 (0.7)      \\
Nearest Neighbor&43.3 (1.7)      &46.7 (1.2)      &49.9 (0.8)      &52.9 (0.6)      &55.5 (0.5)      \\
Random Walk \cite{zhou2004lazy}&53.9 (1.6)      &57.9 (1.1)      &61.7 (0.6)      &65.4 (0.5)      &68.0 (0.4)      \\
MBO \cite{garcia2014multiclass}&46.0 (4.0)      &56.7 (1.9)      &62.4 (1.0)      &65.5 (0.8)      &68.2 (0.5)      \\
VolumeMBO \cite{jacobs2018auction}&59.2 (3.2)      &61.8 (2.0)      &63.6 (1.4)      &64.5 (1.3)      &65.8 (0.9)      \\
WNLL \cite{shi2017weighted}&54.0 (2.8)      &60.3 (1.6)      &64.2 (0.7)      &66.6 (0.6)      &68.2 (0.4)      \\
Centered Kernel \cite{Mai}&27.3 (2.1)      &35.4 (1.8)      &44.9 (1.8)      &53.7 (1.9)      &60.1 (1.5)      \\
Sparse LP \cite{jung2016semi}&15.6 (3.1)      &17.4 (3.9)      &20.0 (1.9)      &21.7 (1.3)      &15.0 (1.1)      \\
p-Laplace \cite{flores2019algorithms}&56.4 (1.8)      &60.4 (1.2)      &63.8 (0.6)      &66.3 (0.6)      &68.7 (0.3)      \\
{\bf Poisson}        &58.3 (1.7)      &61.5 (1.3)      &63.8 (0.8)      &65.6 (0.6)      &67.3 (0.4)      \\
{\bf PoissonMBO}     &{\bf 61.8 (2.2)}&{\bf 64.5 (1.6)}&{\bf 66.9 (0.8)}&{\bf 68.7 (0.6)}&{\bf 70.3 (0.4)}\\
\bottomrule
\end{tabular}
\end{sc}
\end{small}
\end{center}
\vskip -0.1in
\end{table*}

\begin{figure*}[t!]
\centering
\subfigure[]{\includegraphics[trim = 10 15 10 10, clip=true,width=0.41\textwidth]{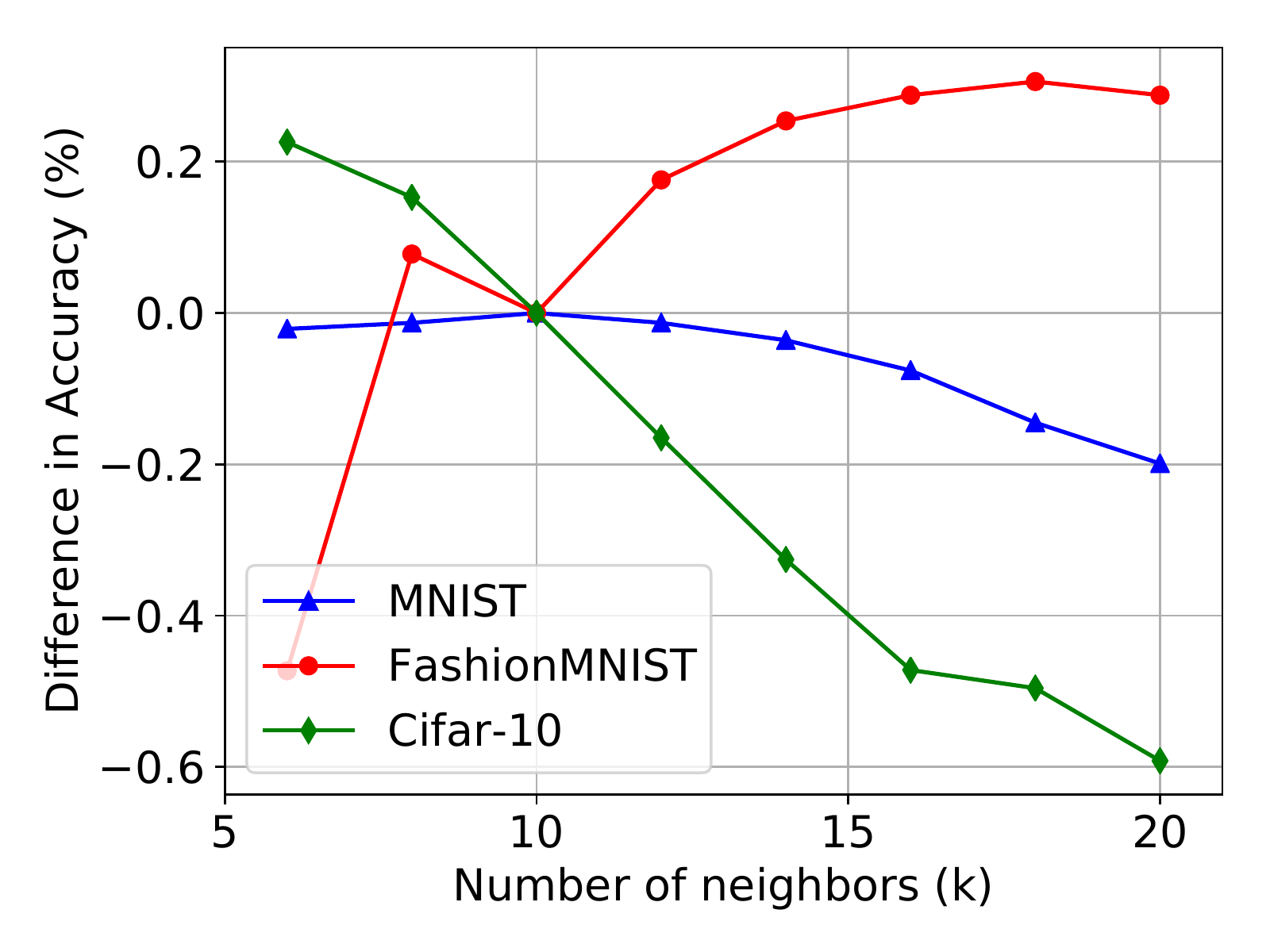}}
\hspace{1cm}
\subfigure[]{\includegraphics[trim = 10 15 10 10, clip=true,width=0.41\textwidth]{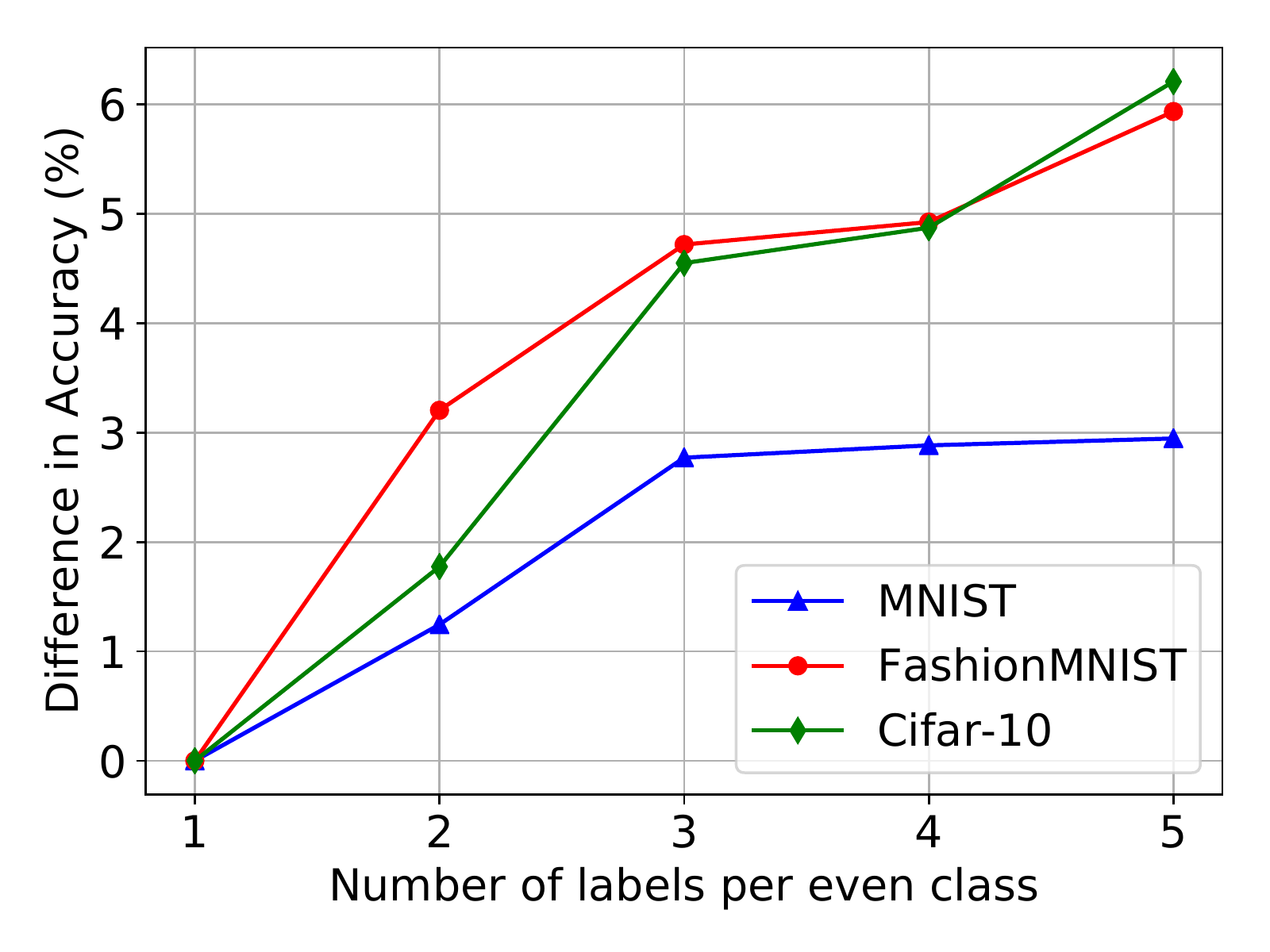}}
\vspace{-4mm}
\caption{Accuracy of Poisson Learning for (a) different numbers of neighbors $k$ used to construct the graph and (b) unbalanced training data. In (a) we used 5 labels per class and in (b) we used 1 label per class for the odd numbered classes, and $m=1,2,3,4,5$ labels per class for the even numbered classes. Both figures show the difference in accuracy compared to $k=10$ and balanced training data. }
\label{fig:kacc}
\end{figure*}

We tested Poisson learning on three datasets: MNIST \cite{lecun1998gradient}, FashionMNIST \cite{xiao2017fashion} and Cifar-10 \cite{krizhevsky2009learning}.  FashionMNIST is a drop-in replacement for MNIST consisting of 10 classes of clothing items. To build good quality graphs, we trained autoencoders to extract important features from the data. For MNIST and FashionMNIST, we used variational autoencoders with 3 fully connected layers of sizes (784,400,20) and (784,400,30), respectively, followed by a symmetrically defined decoder. The autoencoder was trained for 100 epochs on each dataset. The autoencoder architecture, loss, and training, are similar to \cite{kingma2013bayes}. For Cifar-10, we used the AutoEncodingTransformations architecture from \cite{zhang2019aet}, with all the default parameters from their paper, and we normalized the features to unit-vectors.

For each dataset, we constructed a graph over the latent feature space. We used all available data to construct the graph, giving $n=70,000$ nodes for MNIST and FashionMNIST, and $n=60,000$ nodes for Cifar-10. The graph was constructed as a $K$-nearest neighbor graph with Gaussian weights given by
\[w_{ij} =\exp\left( -4|x_i-x_j|^2/d_K(x_i)^2 \right),\]
where $x_i$ represents the latent variables for image $i$, and $d_K(x_i)$ is the distance in the latent space between $x_i$ and its $K^{\rm th}$ nearest neighbor. We used $K=10$ in all experiments. The weight matrix was then symmetrized by replacing $W$ with $W+W^T$. For Poisson learning, we additionally set $w_{ii}=0$ for all $i$. Placing zeros on the diagonal does not change the solution the Poisson learning equation \eqref{eq:PoissonL}, but it does accelerate convergence of the iteration in Algorithm \ref{alg:poisson} by allowing the random walk to propagate faster.

We compare against Laplace learning \eqref{eq:Laplace} \cite{zhu2003semi}, lazy random walks \cite{zhou2004lazy,zhou2004learning}, multiclass MBO \cite{garcia2014multiclass,bertozzi2012diffuse}, weighted nonlocal Laplacian (WNLL) \cite{shi2017weighted}, volume constrained MBO \cite{jacobs2018auction}, Centered Kernel Method \cite{Mai}, sparse label propagation \cite{jung2016semi}, and $p$-Laplace  learning \cite{flores2019algorithms}. In the volume constrained MBO method we used exact volume constraints and temperature of $T=0.1$. In the Centered Kernel Method, we chose $\alpha$ to be $5$\% larger than the spectral norm of the centered weight matrix. For a baseline reference, we also compared against a nearest neighbor classifier that chooses the label of the closest labeled vertex with respect to the graph geodesic distance. In all experiments, we ran 100 trials randomly choosing which data points are labeled, with the exception of the $p$-Laplace and sparse label propagation methods, which are slower and were run for 10 trials. The same random label permutations were used for all methods. 

Tables \ref{tab:MNIST}, \ref{tab:FashionMNIST} and \ref{tab:Cifar10} show the average accuracy and standard deviation over all 100 trials for various low label rates. We also ran experiments at higher label rates on FashionMNIST and Cifar-10, which are reported in the lower half of their respective tables. We mention that in Tables \ref{tab:MNIST}, \ref{tab:FashionMNIST} and \ref{tab:Cifar10} the training data is balanced, so $\bar{y}=\tfrac{1}{10}\one$. Thus, the label decisions \eqref{eq:modified_labeldec} and \eqref{eq:labeldec} are equivalent.

We see that in nearly all cases, PoissonMBO outperforms all other methods, with PoissonMBO typically outperforming Poisson learning by a few percentage points. The most drastic improvements are seen at the ultra low label rates, and at the moderate label rates shown in Tables \ref{tab:FashionMNIST} and \ref{tab:Cifar10}, several other methods perform well. We note that VolumeMBO and PoissonMBO are the only methods that incorporate prior knowledge of class sizes, and are most suitable for direct comparison. Our results can be compared to the clustering results of 67.2\% on FashionMNIST \cite{mcconville2019n2d} and 41.2\% on Cifar-10 \cite{ghasedi2019balanced}.

Figure \ref{fig:kacc}(a) shows the accuracy of Poisson learning at 5 labels per class as a function of the number of neighbors $K$ used in constructing the graph, showing that the algorithm is not particularly sensitive to this.  Figure \ref{fig:kacc}(b) shows the accuracy of Poisson learning for unbalanced training data. We take 1 label per class for half the classes and $m=1,2,3,4,5$ labels per class for the other half. Since the training data is unbalanced, $\bar{y}$ is not a constant vector and the label decision in Step 11 of Algorithm \ref{alg:poisson} (Poisson Learning) compensates for unbalanced training data. Note that in Figure \ref{fig:kacc} we plot the difference in accuracy compared to (a) the baseline of $k=10$ and (b) 1 label per class. In Figure \ref{fig:kacc} (b), we see an increase in accuracy when only half the classes get additional labels, though the increase is not as large as in Tables \ref{tab:MNIST}, \ref{tab:FashionMNIST} and \ref{tab:Cifar10} where all classes get additional labels.

\vspace{-3mm}
\section{Conclusion} \label{sec:conc}

We proposed a new framework for graph-based semi-supervised learning at very low label rates called \emph{Poisson learning}. The method is efficient and simple to implement.  We performed a detailed analysis of Poisson learning, giving random walk and variational interpretations. We also proposed a graph-cut enhancement of Poisson learning, called Poisson MBO, that can give further improvements. We presented numerical results showing that Poisson Learning outperforms all other methods for semi-supervised learning at low label rates on several common datasets.

\section*{Acknowledgements} 

Calder was supported by NSF-DMS grants 1713691,1944925, Cook was supported by a University of Minnesota Grant In Aid award, Thorpe was supported by the European Research Council under the European Union's Horizon 2020 research and innovation programme grant No 777826 (NoMADS) and 647812, and Slep\v{c}ev was supported by NSF-DMS grant 1814991. The authors thank Matt Jacobs for providing code for volume constrained MBO. The authors also thank the Center for Nonlinear Analysis (CNA) at CMU for its support.



\begin{thebibliography}{36}
\providecommand{\natexlab}[1]{#1}
\providecommand{\url}[1]{\texttt{#1}}
\expandafter\ifx\csname urlstyle\endcsname\relax
  \providecommand{\doi}[1]{doi: #1}\else
  \providecommand{\doi}{doi: \begingroup \urlstyle{rm}\Url}\fi

\bibitem[Ando \& Zhang(2007)Ando and Zhang]{ando2007learning}
Ando, R.~K. and Zhang, T.
\newblock Learning on graph with {L}aplacian regularization.
\newblock In \emph{Advances in Neural Information Processing Systems}, pp.\
  25--32, 2007.

\bibitem[Belkin \& Niyogi(2002)Belkin and Niyogi]{Belkin2002UsingMS}
Belkin, M. and Niyogi, P.
\newblock Using manifold structure for partially labelled classification.
\newblock In \emph{Advances in Neural Information Processing Systems}, 2002.

\bibitem[Bertozzi \& Flenner(2012)Bertozzi and Flenner]{bertozzi2012diffuse}
Bertozzi, A.~L. and Flenner, A.
\newblock Diffuse interface models on graphs for classification of high
  dimensional data.
\newblock \emph{Multiscale Modeling \& Simulation}, 10\penalty0 (3):\penalty0
  1090--1118, 2012.

\bibitem[Calder(2018)]{calder2018game}
Calder, J.
\newblock The game theoretic p-{L}aplacian and semi-supervised learning with
  few labels.
\newblock \emph{Nonlinearity}, 32\penalty0 (1), 2018.

\bibitem[Calder(2019)]{calder2017consistency}
Calder, J.
\newblock Consistency of {L}ipschitz learning with infinite unlabeled data and
  finite labeled data.
\newblock \emph{SIAM Journal on Mathematics of Data Science}, 1:\penalty0
  780--812, 2019.

\bibitem[Calder \& Slep{\v{c}}ev(2019)Calder and
  Slep{\v{c}}ev]{calder2018properly}
Calder, J. and Slep{\v{c}}ev, D.
\newblock Properly-{W}eighted graph {L}aplacian for semi-supervised learning.
\newblock \emph{Applied Mathematics {\&} Optimization}, Dec 2019.

\bibitem[DiBenedetto(1993)]{dibenedetto93}
DiBenedetto, E.
\newblock \emph{Degenerate parabolic equations}.
\newblock Springer Science \& Business Media, 1993.

\bibitem[El~Alaoui et~al.(2016)El~Alaoui, Cheng, Ramdas, Wainwright, and
  Jordan]{el2016asymptotic}
El~Alaoui, A., Cheng, X., Ramdas, A., Wainwright, M.~J., and Jordan, M.~I.
\newblock Asymptotic behavior of $\ell_p$-based {L}aplacian regularization in
  semi-supervised learning.
\newblock In \emph{Conference on Learning Theory}, pp.\  879--906, 2016.

\bibitem[Flores et~al.(2019)Flores, Calder, and Lerman]{flores2019algorithms}
Flores, M., Calder, J., and Lerman, G.
\newblock {Algorithms for Lp-based semi-supervised learning on graphs}.
\newblock \emph{arXiv:1901.05031}, 2019.

\bibitem[Garcia-Cardona et~al.(2014)Garcia-Cardona, Merkurjev, Bertozzi,
  Flenner, and Percus]{garcia2014multiclass}
Garcia-Cardona, C., Merkurjev, E., Bertozzi, A.~L., Flenner, A., and Percus,
  A.~G.
\newblock Multiclass data segmentation using diffuse interface methods on
  graphs.
\newblock \emph{IEEE Transactions on Pattern Analysis and Machine
  Intelligence}, 36\penalty0 (8):\penalty0 1600--1613, 2014.

\bibitem[Ghasedi et~al.(2019)Ghasedi, Wang, Deng, and
  Huang]{ghasedi2019balanced}
Ghasedi, K., Wang, X., Deng, C., and Huang, H.
\newblock Balanced self-paced learning for generative adversarial clustering
  network.
\newblock In \emph{Proceedings of the IEEE Conference on Computer Vision and
  Pattern Recognition}, pp.\  4391--4400, 2019.

\bibitem[He et~al.(2004)He, Li, Zhang, Tong, and Zhang]{he2004manifold}
He, J., Li, M., Zhang, H.-J., Tong, H., and Zhang, C.
\newblock Manifold-ranking based image retrieval.
\newblock In \emph{Proceedings of the 12th annual ACM International Conference
  on Multimedia}, pp.\  9--16. ACM, 2004.

\bibitem[Hein et~al.(2007)Hein, Audibert, and Luxburg]{hein2007graph}
Hein, M., Audibert, J.-Y., and Luxburg, U.~v.
\newblock Graph {L}aplacians and their convergence on random neighborhood
  graphs.
\newblock \emph{Journal of Machine Learning Research}, 8\penalty0
  (Jun):\penalty0 1325--1368, 2007.

\bibitem[Jacobs et~al.(2018)Jacobs, Merkurjev, and
  Esedoḡlu]{jacobs2018auction}
Jacobs, M., Merkurjev, E., and Esedoḡlu, S.
\newblock Auction dynamics: A volume constrained {MBO} scheme.
\newblock \emph{Journal of Computational Physics}, 354:\penalty0 288--310,
  2018.

\bibitem[Jung et~al.(2016)Jung, Hero~III, Mara, and Jahromi]{jung2016semi}
Jung, A., Hero~III, A.~O., Mara, A., and Jahromi, S.
\newblock Semi-supervised learning via sparse label propagation.
\newblock \emph{arXiv preprint arXiv:1612.01414}, 2016.

\bibitem[Kingma \& Welling(2014)Kingma and Welling]{kingma2013bayes}
Kingma, D.~P. and Welling, M.
\newblock Auto-encoding variational {B}ayes.
\newblock In \emph{Proceedings of the 2nd International Conference on Learning
  Representations (ICLR)}, 2014.

\bibitem[Krizhevsky et~al.(2009)Krizhevsky, Hinton,
  et~al.]{krizhevsky2009learning}
Krizhevsky, A., Hinton, G., et~al.
\newblock Learning multiple layers of features from tiny images.
\newblock 2009.

\bibitem[Kyng et~al.(2015)Kyng, Rao, Sachdeva, and
  Spielman]{kyng2015algorithms}
Kyng, R., Rao, A., Sachdeva, S., and Spielman, D.~A.
\newblock Algorithms for {L}ipschitz learning on graphs.
\newblock In \emph{Conference on Learning Theory}, pp.\  1190--1223, 2015.

\bibitem[LeCun et~al.(1998)LeCun, Bottou, Bengio, and
  Haffner]{lecun1998gradient}
LeCun, Y., Bottou, L., Bengio, Y., and Haffner, P.
\newblock Gradient-based learning applied to document recognition.
\newblock \emph{Proceedings of the IEEE}, 86\penalty0 (11):\penalty0
  2278--2324, 1998.

\bibitem[Mai \& Couillet(2018)Mai and Couillet]{Mai}
Mai, X. and Couillet, R.
\newblock Random matrix-inspired improved semi-supervised learning on graphs.
\newblock In \emph{International Conference on Machine Learning}, 2018.

\bibitem[McConville et~al.(2019)McConville, Santos-Rodriguez, Piechocki, and
  Craddock]{mcconville2019n2d}
McConville, R., Santos-Rodriguez, R., Piechocki, R.~J., and Craddock, I.
\newblock N2d:(not too) deep clustering via clustering the local manifold of an
  autoencoded embedding.
\newblock \emph{arXiv preprint arXiv:1908.05968}, 2019.

\bibitem[Nadler et~al.(2009)Nadler, Srebro, and Zhou]{nadler2009semi}
Nadler, B., Srebro, N., and Zhou, X.
\newblock Semi-supervised learning with the graph {L}aplacian: The limit of
  infinite unlabelled data.
\newblock \emph{Advances in Neural Information Processing Systems},
  22:\penalty0 1330--1338, 2009.

\bibitem[Shi et~al.(2017)Shi, Osher, and Zhu]{shi2017weighted}
Shi, Z., Osher, S., and Zhu, W.
\newblock Weighted nonlocal {L}aplacian on interpolation from sparse data.
\newblock \emph{Journal of Scientific Computing}, 73\penalty0 (2-3):\penalty0
  1164--1177, 2017.

\bibitem[Slep\v{c}ev \& Thorpe(2019)Slep\v{c}ev and
  Thorpe]{slepcev2019analysis}
Slep\v{c}ev, D. and Thorpe, M.
\newblock Analysis of p-{L}aplacian regularization in semisupervised learning.
\newblock \emph{SIAM Journal on Mathematical Analysis}, 51\penalty0
  (3):\penalty0 2085--2120, 2019.

\bibitem[Van~Gennip \& Bertozzi(2012)Van~Gennip and Bertozzi]{van2012gamma}
Van~Gennip, Y. and Bertozzi, A.~L.
\newblock Gamma-convergence of graph {Ginzburg-Landau} functionals.
\newblock \emph{Advances in Differential Equations}, 17\penalty0
  (11/12):\penalty0 1115--1180, 2012.

\bibitem[Xiao et~al.(2017)Xiao, Rasul, and Vollgraf]{xiao2017fashion}
Xiao, H., Rasul, K., and Vollgraf, R.
\newblock Fashion-{MNIST}: A novel image dataset for benchmarking machine
  learning algorithms.
\newblock \emph{arXiv preprint arXiv:1708.07747}, 2017.

\bibitem[Xu et~al.(2011)Xu, Bu, Chen, Cai, He, Liu, and Luo]{xu2011efficient}
Xu, B., Bu, J., Chen, C., Cai, D., He, X., Liu, W., and Luo, J.
\newblock Efficient manifold ranking for image retrieval.
\newblock In \emph{Proceedings of the 34th International ACM SIGIR Conference
  on Research and Development in Information Retrieval}, pp.\  525--534. ACM,
  2011.

\bibitem[Yang et~al.(2013)Yang, Zhang, Lu, Ruan, and Yang]{yang2013saliency}
Yang, C., Zhang, L., Lu, H., Ruan, X., and Yang, M.-H.
\newblock Saliency detection via graph-based manifold ranking.
\newblock In \emph{Proceedings of the IEEE Conference on Computer Vision and
  Pattern Recognition}, pp.\  3166--3173, 2013.

\bibitem[Zhang et~al.(2019)Zhang, Qi, Wang, and Luo]{zhang2019aet}
Zhang, L., Qi, G.-J., Wang, L., and Luo, J.
\newblock Aet vs. aed: Unsupervised representation learning by auto-encoding
  transformations rather than data.
\newblock In \emph{Proceedings of the IEEE Conference on Computer Vision and
  Pattern Recognition}, pp.\  2547--2555, 2019.

\bibitem[Zhou \& Sch{\"o}lkopf(2004)Zhou and Sch{\"o}lkopf]{zhou2004lazy}
Zhou, D. and Sch{\"o}lkopf, B.
\newblock Learning from labeled and unlabeled data using random walks.
\newblock In \emph{Joint Pattern Recognition Symposium}, pp.\  237--244.
  Springer, 2004.

\bibitem[Zhou et~al.(2004{\natexlab{a}})Zhou, Bousquet, Lal, Weston, and
  Sch{\"o}lkopf]{zhou2004learning}
Zhou, D., Bousquet, O., Lal, T.~N., Weston, J., and Sch{\"o}lkopf, B.
\newblock Learning with local and global consistency.
\newblock In \emph{Advances in Neural Information Processing Systems}, pp.\
  321--328, 2004{\natexlab{a}}.

\bibitem[Zhou et~al.(2004{\natexlab{b}})Zhou, Weston, Gretton, Bousquet, and
  Sch{\"o}lkopf]{zhou2004ranking}
Zhou, D., Weston, J., Gretton, A., Bousquet, O., and Sch{\"o}lkopf, B.
\newblock Ranking on data manifolds.
\newblock In \emph{Advances in Neural Information Processing Systems}, pp.\
  169--176, 2004{\natexlab{b}}.

\bibitem[Zhou et~al.(2005)Zhou, Huang, and Sch{\"o}lkopf]{zhou2005learning}
Zhou, D., Huang, J., and Sch{\"o}lkopf, B.
\newblock Learning from labeled and unlabeled data on a directed graph.
\newblock In \emph{Proceedings of the 22nd International Conference on Machine
  Learning}, pp.\  1036--1043. ACM, 2005.

\bibitem[Zhou \& Belkin(2011)Zhou and Belkin]{zhou2011semi}
Zhou, X. and Belkin, M.
\newblock Semi-supervised learning by higher order regularization.
\newblock In \emph{Proceedings of the Fourteenth International Conference on
  Artificial Intelligence and Statistics}, pp.\  892--900, 2011.

\bibitem[Zhu et~al.(2003)Zhu, Ghahramani, and Lafferty]{zhu2003semi}
Zhu, X., Ghahramani, Z., and Lafferty, J.~D.
\newblock Semi-supervised learning using {G}aussian fields and harmonic
  functions.
\newblock In \emph{Proceedings of the 20th International Conference on Machine
  learning (ICML-03)}, pp.\  912--919, 2003.

\bibitem[Zhu et~al.(2005)Zhu, Lafferty, and Rosenfeld]{zhu2005semi}
Zhu, X., Lafferty, J., and Rosenfeld, R.
\newblock \emph{Semi-supervised learning with graphs}.
\newblock PhD thesis, Carnegie Mellon University, 2005.

\end{thebibliography}

\onecolumn

\appendix
\appendixpage

\section{Proofs} \label{proofs}

We provide the proofs by section.

\subsection{Proofs for Section \ref{sec:randomwalk}}\label{sec:rwproofs}

We recall  $X_0^x,X_1^x,X_2^x,\dots$ is a random walk on $X$ starting at $X_0^x =x$ with transition probabilities 
\[ \P(X^x_k = x_j \, | \, X^x_{k-1} = x_i) = \frac{w_{ij}}{d_i}. \]
Before giving the proof of Theorem \ref{thm:PoissonRandomWalk}, we recall some properties of random walks and Markov chains. The random walk described above induces a Markov chain with state space $X$.   Since the graph is connected and $X$ is finite, the Markov chain is \emph{positive recurrent}. We also assume the Markov chain is \emph{aperiodic}. This implies the distribution of the random walker converges to the invariant distribution of the Markov chain as $k\to \infty$. In particular, choose any initial distribution $p_0\in \lx$ such that $\sum_{i=1}^n p_0(x_i)=1$ and $p_0\geq 0$, and define
\begin{equation}\label{eq:advance}
p_{k+1}(x_i) = \sum_{j=1}^n \frac{w_{ij}}{d_j}p_k(x_j).
\end{equation}
Then $p_k$ is the distribution of the random walker after $k$ steps. Since the Markov chain is positive recurrent and aperiodic we have that
\[\lim_{k\to \infty} p_k(x_i) = \pi(x_i)\]
for all $i$, where 
\[\pi(x_i) = \frac{d_i}{\sum_{i=1}^n d_i}\]
is the invariant distribution of the Markov chain. It is simple to check that if $p_0\in \lx$ is any function (i.e., not necesarily a probability distribution), and we define $p_k$ by the iteration \eqref{eq:advance}, then
\begin{equation}\label{eq:sum}
\lim_{k\to \infty} p_k(x_i) = \pi(x_i)\sum_{j=1}^np_0(x_j).
\end{equation}

We now give the proof of Theorem \ref{thm:PoissonRandomWalk}.

\begin{proof}[Proof of Theorem \ref{thm:PoissonRandomWalk}]
Define the normalized Green's function
\[G_T(x_i,x_j) = \frac{1}{d_i}\E\left[ \sum_{k=0}^T \one_{\{X_k^{x_j}=x_i\}} \right]=\frac{1}{d_i}\sum_{k=0}^T \P(X^{x_j}_k=x_i).\]
Then we have
\begin{align*}
d_iG_T(x_i,x_j) &= \delta_{ij} + \sum_{k=1}^T \sum_{\ell=1}^n\frac{w_{\ell i}}{d_\ell}\P(X^{x_j}_{k-1}=x_\ell)\\
&=\delta_{ij} + \sum_{\ell=1}^n \frac{w_{\ell i}}{d_\ell}\sum_{k=1}^T \P(X^{x_j}_{k-1}=x_\ell)\\
&=\delta_{ij} + \sum_{\ell=1}^n \frac{w_{\ell i}}{d_\ell}\sum_{k=0}^{T-1}  \P(X^{x_j}_{k}=x_\ell)\\
&=\delta_{ij} + \sum_{\ell=1}^n w_{\ell i}G_{T-1}(x_\ell,x_j).
\end{align*}
Therefore we have
\[d_i(G_T(x_i,x_j) - G_{T-1}(x_i,x_j)) + \L G_{T-1}(x_i,x_j) = \delta_{ij},\]
where the Laplacian $\L$ is applied to the first variable of $G_{T-1}$ while the second variable is fixed (i.e. $\L G_{T-1}(x_i,x_j) = [\L G_{T-1}(\cdot,x_j)]_{x_i}$).
Since 
\[u_T(x_i) = \sum_{j=1}^m (y_j-\cons)G_T(x_i,x_j)\]
we have
\[ d_i(u_T(x_i) - u_{T-1}(x_i)) + \L u_{T-1}(x_i) = \sum_{j=1}^m (y_j-\cons)\delta_{ij}. \]
Summing both sides over $i=1,\dots,n$ we find that
\[(u_T)_{d,X}=\sum_{i=1}^n d_iu_T(x_i) = \sum_{i=1}^n d_iu_{T-1}(x_i)=(u_{T-1})_{d,X},\]
where $d=(d_1,d_2,\dots,d_n)$ is the vector of degrees.
Therefore $(u_T)_{d,X}=(u_{T-1})_{d,X}=\cdots = (u_0)_{d,X}$. Noting that
\[d_iu_0(x_i) =\sum_{j=1}^m(y_j-\cons)\delta_{ij},\]
we have $(u_0)_{d,X}=0$, and so $(u_T)_{d,X}=0$ for all $T\geq 0$. Let $u\in \lx$ be the solution of
\[\L u(x_i) = \sum_{j=1}^m (y_j-\cons)\delta_{ij}\]
satisfying $(u)_{d,X}=0$. Define $v_T(x_i) = d_i(u_T(x_i) - u(x_i))$. We then check that $v_T$ satisfies 
\[ v_T(x_i) = \sum_{j=1}^n \frac{w_{ij}}{d_j}v_{T-1}(x_j), \]
and $(v_T)_X = 0$. Since the random walk is aperiodic and the graph is connected, we have by \eqref{eq:sum} that $\lim_{T\to \infty} v_T(x_i) = \pi(x_i)(v_0)_X=0$, which completes the proof.
\end{proof}

\subsection{Proofs for Section~\ref{sec:variational}}

We first review some additional calculus on graphs. The \emph{graph divergence} of a vector field $V$ is defined as
\[ \div V(x_i) = \sum_{j=1}^nw_{ij}V(x_i,x_j). \]
The divergence is the negative adjoint of the gradient; that is, for every vector field $V\in \lxs$ and function $u\in \lx$ 
\begin{equation}\label{eq:divadjoint}
(\nabla u,V)_{\lxs} = -(u,\div V)_{\lx}.
\end{equation}
We also define  $\|u\|^p_{\lpx} = \sum_{i=1}^n|u(x_i)|^p$ and
\[ \|V\|^p_{\lpxs} = \frac12 \sum_{i,j=1}^nw_{ij}|V(x_i,x_j)|^p, \]
where $|\cdot|$ is the Euclidean norm on $\R^k$.

The graph Laplacian $\L u$ of a function $u\in \lx$ is defined as negative of the composition of gradient and divergence
\[ \L u(x_i) = -\div(\nabla u)(x_i) = \sum_{j=1}^nw_{ij}(u(x_i)-u(x_j)). \]
The operator $\L$ is the \emph{unnormalized} graph Laplacian. Using \eqref{eq:divadjoint} we have
\[ (\L u,v)_{\lx} = (-\div \nabla u,v)_{\lx} = (\nabla u,\nabla v)_{\lxs}. \]
In particular $(\L u,v)_{\lx} = (u,\L v)_{\lx}$, and so the graph Laplacian $\L$ is self-adjoint as an operator $\L:\lx\to \lx$. We also note that 
\[ (\L u,u)_{\lx} = (\nabla u,\nabla u)_{\lxs} = \|\nabla u\|^2_{\lxs}, \]
that is, $\L$ is positive semi-definite.

The variational interpretation of Poisson learning can be directly extended to  $\ell^p$ versions, so we proceed in generality here. For a function $u:X\to \R^k$ and a positive vector $a\in\bbR^n$ (meaning $a_i> 0$ for all $i=1,\dots,n$) we define the weighted mean value
\[ (u)_{a,X} := \frac{1}{\sum_{i=1}^n a_i}\sum_{i=1}^n a_i u(x_i). \]
We define the space of weighted mean-zero functions
\[ \lpxaz = \{u \in \lpx \, : \, (u)_{a,X}=0\}. \]

For $p\geq 1$ and $\mu>0$ we consider the variational problem
\begin{equation}\label{eq:PoissonVar}
\min_{u\in \lpxaz} \hspace{-3pt} \bigg\{ \frac{1}{p}\|\nabla u\|_{\lpxs}^p \hspace{-1pt} - \hspace{-1pt} \mu \sum_{j=1}^m (y_j - \cons)\hspace{-1.5pt}\cdot\hspace{-1.5pt}u(x_j) \bigg\} \hspace{-3pt}
\end{equation}
where $\cons = \frac{1}{m}\sum_{j=1}^m y_j$.  This generalizes the variational problem \eqref{eq:PoissonVar2} for Poisson learning, and the theorem below generalizes Theorem \ref{thm:exist2}.
\begin{theorem}\label{thm:exist}
Assume $G$ is connected. For any $p>1$, positive $a\in\bbR^n$, and $\mu\geq 0$, there exists a unique solution $u\in \lpxaz$ of \eqref{eq:PoissonVar}. Furthermore, the minimizer $u$ satisfies the graph $p$-Laplace equation
\begin{equation}\label{eq:pPoisson}
-\div(|\nabla u|^{p-2}\nabla u)(x_i) =  \mu\sum_{j=1}^m(y_j-\cons)\delta_{ij}.
\end{equation}
\end{theorem}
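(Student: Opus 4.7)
The plan is to establish existence and uniqueness by the direct method of the calculus of variations, exploiting strict convexity, and then derive the Euler--Lagrange equation using constraint-preserving variations. Write
\[
J(u) = \frac{1}{p}\|\nabla u\|_{\ell^p(X^2)}^p - \mu \sum_{j=1}^m (y_j-\bar y)\cdot u(x_j).
\]

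First I would establish a Poincar\'e-type inequality on the connected graph $G$: for every $u\in \ell^p_{a,0}(X)$ one has $\|u\|_{\ell^p(X)} \le C\|\nabla u\|_{\ell^p(X^2)}$. The standard argument is that $u\mapsto \|\nabla u\|_{\ell^p(X^2)}$ is a seminorm whose kernel is the constants (because connectedness forces $\nabla u\equiv 0$ to imply $u$ is constant), and the constraint $(u)_{a,X}=0$ together with the positivity of $a$ removes that kernel; equivalence of norms in finite dimensions on the quotient then yields the inequality. Combining this with the obvious bound $|\sum_j (y_j-\bar y)\cdot u(x_j)|\le C'\|u\|_{\ell^p(X)}$ gives $J(u) \ge \tfrac{1}{p}C^{-p}\|u\|_{\ell^p}^p - \mu C'\|u\|_{\ell^p}$, which is coercive on $\ell^p_{a,0}(X)$ for $p>1$. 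Since $J$ is continuous and $\ell^p_{a,0}(X)$ is finite-dimensional, any minimizing sequence admits a convergent subsequence whose limit is a minimizer.

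For uniqueness I would use strict convexity. The function $t\mapsto |t|^p$ on $\mathbb R^k$ is strictly convex for $p>1$, so $u\mapsto \|\nabla u\|^p_{\ell^p(X^2)}$ is convex, with equality in the midpoint inequality only when $\nabla u_1(x_i,x_j)=\nabla u_2(x_i,x_j)$ for every edge. Connectedness then forces $u_1-u_2$ to be constant, and the constraint $(u_1-u_2)_{a,X}=0$ (with $a$ positive) forces that constant to be zero. Since the linear term is affine, $J$ is strictly convex on $\ell^p_{a,0}(X)$ and the minimizer is unique.

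To derive the Euler--Lagrange equation I would perturb by $\phi\in \ell^p_{a,0}(X)$, so $(u+\epsilon\phi)\in \ell^p_{a,0}(X)$ is admissible. Differentiating at $\epsilon=0$ and using the adjointness \eqref{eq:divadjoint} yields
\[
\bigl(-\div(|\nabla u|^{p-2}\nabla u),\phi\bigr)_{\ell^2(X)} \;=\; \mu\sum_{j=1}^m (y_j-\bar y)\cdot\phi(x_j),
\]
for every mean-zero $\phi$. Hence $-\div(|\nabla u|^{p-2}\nabla u)(x_i) - \mu\sum_{j=1}^m(y_j-\bar y)\delta_{ij} = \lambda a_i$ for some Lagrange multiplier $\lambda\in\mathbb R^k$. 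The final (and to my mind the only nonobvious) step is to show $\lambda=0$: summing the identity over $i$, the divergence term vanishes because $V(x_i,x_j)=|\nabla u|^{p-2}\nabla u$ is antisymmetric and $w_{ij}=w_{ji}$, while $\sum_{j=1}^m(y_j-\bar y)=0$ by definition of $\bar y$, leaving $\lambda\sum_i a_i=0$ and thus $\lambda=0$ since $a$ is positive. This gives exactly \eqref{eq:pPoisson}.

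The main obstacle, in my view, is not any single step but the careful bookkeeping around the constraint $(u)_{a,X}=0$: one must check that the constraint-preserving test functions $\phi$ span the appropriate codimension-$k$ subspace, that the Lagrange multiplier is the correct $\mathbb R^k$-valued object, and that the zero-sum property of the source term combined with the antisymmetry of the $p$-gradient flux is exactly what kills that multiplier. Once the cancellations are tracked, everything else is a routine application of the direct method plus strict convexity on a finite-dimensional connected graph.
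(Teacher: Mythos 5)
Your proposal is correct, and the existence part and the derivation of the Euler--Lagrange equation match the paper's proof in substance: both rest on the Poincar\'e inequality (Proposition A.4) for coercivity, the direct method in finite dimensions for existence, and the same cancellation to kill the constraint's multiplier. On that last point your phrasings differ only cosmetically: you introduce an explicit Lagrange multiplier $\lambda\in\R^k$ with residual $a_i\lambda$ and kill it by summing over $i$ (antisymmetry of the flux plus $\sum_j(y_j-\cons)=0$), whereas the paper plugs in the particular admissible test function $v(x_i)=a_i^{-1}\bigl(-\div(|\nabla u|^{p-2}\nabla u)(x_i)-\mu f(x_i)\bigr)$, whose admissibility is verified by exactly the same two cancellations. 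Where you genuinely diverge is uniqueness: you argue via strict convexity of $t\mapsto|t|^p$ on $\R^k$ for $p>1$, so that midpoint equality forces the gradients to agree edgewise, and connectedness plus the mean-zero constraint finishes; the paper instead invokes its quantitative stability estimate (Lemma A.5), which bounds $\|u-v\|_{\lpx}$ in terms of $\|f-g\|_{\lqx}$ via the monotonicity inequalities for the $p$-Laplacian, and specializes to $f=g$. Your route is more elementary and fully self-contained for this theorem; the paper's route is heavier but buys a modulus of continuity for the solution map $f\mapsto u$ that is of independent interest (and is stated as a standalone lemma). Both are complete proofs.
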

We give the proof of Theorem \ref{thm:exist} below, after some remarks and other results.
\begin{remark}
When $p=1$, solutions of \eqref{eq:PoissonVar} may not exist for all $\mu\geq 0$, since the variational problem~\eqref{eq:PoissonVar} may not be bounded from below. We can show that there exists $C>0$ such that if $\mu < C$, the variational problem is bounded from below and our argument for existence in Theorem~\ref{thm:exist} goes through. 
\label{rem:p1}
\end{remark}

It turns out that $\mu>0$ is a redundant parameter when $p>1$.
\begin{lemma}\label{lem:mu}
Let $p>1$ and for $\mu>0$ let $u_\mu$ be the solution of \eqref{eq:PoissonVar}. Then, $u_{\mu} = \mu^{1/(p-1)}u_1$.
\end{lemma}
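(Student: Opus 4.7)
The plan is to exploit the $(p-1)$-homogeneity of the $p$-Laplace operator together with the uniqueness statement in Theorem \ref{thm:exist}. By Theorem \ref{thm:exist}, the map $\mu \mapsto u_\mu$ is well-defined, and $u_\mu$ is characterized as the unique element of $\lpxaz$ satisfying the Euler--Lagrange equation \eqref{eq:pPoisson}. So it suffices to verify that the candidate $v := \mu^{1/(p-1)} u_1$ lies in $\lpxaz$ and solves \eqref{eq:pPoisson} with parameter $\mu$; uniqueness then forces $v = u_\mu$.

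The membership $v \in \lpxaz$ is immediate from linearity of $(\cdot)_{a,X}$, since $(u_1)_{a,X} = 0$. For the equation, the key computation is the scaling of the $p$-Laplace operator: since $\nabla v = \mu^{1/(p-1)} \nabla u_1$ and the map $\xi \mapsto |\xi|^{p-2} \xi$ is positively $(p-1)$-homogeneous,
\[
|\nabla v|^{p-2} \nabla v = \mu^{(p-2)/(p-1)} \cdot \mu^{1/(p-1)} \, |\nabla u_1|^{p-2} \nabla u_1 = \mu \, |\nabla u_1|^{p-2} \nabla u_1.
\]
Applying $-\div$ and using that $u_1$ satisfies \eqref{eq:pPoisson} at $\mu = 1$ gives
\[
-\div\bigl(|\nabla v|^{p-2} \nabla v\bigr)(x_i) = \mu \sum_{j=1}^m (y_j - \cons)\,\delta_{ij},
\]
which is precisely \eqref{eq:pPoisson} at parameter $\mu$. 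Uniqueness in Theorem \ref{thm:exist} then gives $u_\mu = v = \mu^{1/(p-1)} u_1$.

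There is essentially no obstacle here — the argument is a one-line scaling once one has the Euler--Lagrange characterization and uniqueness in hand; the only mild care needed is that the exponent $\alpha = 1/(p-1)$ is precisely the one that makes the nonlinear term $|\nabla u|^{p-2}\nabla u$ pick up a clean factor of $\mu$. As a sanity check, one can alternatively substitute $u = \mu^{1/(p-1)} v$ directly into the functional \eqref{eq:PoissonVar} and observe that it becomes $\mu^{p/(p-1)}$ times the $\mu=1$ functional evaluated at $v$; since $\mu^{p/(p-1)} > 0$, the two minimization problems are equivalent under the bijection $v \leftrightarrow \mu^{1/(p-1)} v$ on $\lpxaz$, yielding the same conclusion.
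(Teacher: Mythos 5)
Your proposal is correct, but your primary argument takes a different route from the paper. The paper works entirely at the level of the functional: it establishes the scaling identity $I_{p,\mu}(\mu^{1/(p-1)}u) = \mu^{p/(p-1)}I_{p,1}(u)$, uses it in both directions to show $I_{p,\mu}(\mu^{1/(p-1)}u_1) = I_{p,\mu}(u_\mu)$, and then invokes uniqueness of the minimizer from Theorem \ref{thm:exist}. This is exactly the "sanity check" you relegate to your final sentence. Your main argument instead passes through the Euler--Lagrange equation \eqref{eq:pPoisson} and the $(p-1)$-homogeneity of $\xi\mapsto|\xi|^{p-2}\xi$; the computation is right, and the exponent bookkeeping ($\mu^{(p-2)/(p-1)}\cdot\mu^{1/(p-1)}=\mu$) is clean. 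The one point to tighten: to conclude from ``$v:=\mu^{1/(p-1)}u_1$ solves \eqref{eq:pPoisson} with parameter $\mu$'' that $v=u_\mu$, you need uniqueness of $\lpxaz$ \emph{solutions of the equation}, not merely uniqueness of the minimizer as literally stated in Theorem \ref{thm:exist} (a priori a PDE solution need not be the minimizer). That stronger uniqueness is supplied by Lemma \ref{lem:quanterror} (take $f=g=\mu\sum_j(y_j-\cons)\delta_{ij}$), which is in fact how the paper proves uniqueness in Theorem \ref{thm:exist}, so the gap is purely one of citation. The trade-off between the two approaches: the paper's functional argument avoids any appeal to the Euler--Lagrange characterization and would survive in settings where the first-order condition is delicate, whereas your route makes the mechanism --- homogeneity of the $p$-Laplacian --- more transparent and generalizes immediately to any $(p-1)$-homogeneous operator with a uniqueness theory.
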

It follows from Lemma \ref{lem:mu} that when $p>1$, the fidelity parameter $\mu>0$ is \emph{completely irrelevant} for classification problems, since the identity $u_{\mu} = \mu^{1/(p-1)}u_1$ implies that the label decision \eqref{eq:labeldec} gives the same labeling for \emph{any} value of $\mu>0$. Hence, in Poisson learning with $p>1$ we always take $\mu=1$. This remark is false for $p=1$.

Before proving Theorem~\ref{thm:exist} we first record a Poincar\'e inequality. The proof is standard but we include it for completeness.
We can prove the Poincar\'e inequality for non-negative vectors $a\in\bbR^n$, meaning that $a_i\geq 0$ for every $i=1,\dots, n$ as long as $\sum_{i=1}^n a_i>0$.

\begin{proposition}\label{prop:poincare}
Assume $G$ is connected, $a\in\bbR^d$ is non-negative with $\sum_{i=1}^n a_i>0$, and $p\geq 1$. There exists $\lambda_p>0$ such that
\begin{equation}\label{eq:Poincare}
\lambda_p \|u - (u)_{a,X}\|_{\lpx} \leq \|\nabla u\|_{\lpxs},
\end{equation}
for all $u\in \lpx$.
\end{proposition}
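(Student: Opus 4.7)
The plan is to run a standard compactness argument exploiting the finite dimensionality of $\ell^p(X)$. Since $\nabla(u-c) = \nabla u$ for any constant vector $c\in \R^k$, and since $v := u - (u)_{a,X}$ satisfies $(v)_{a,X}=0$, it suffices to prove the inequality for $u\in \lpxaz$ in the form
\[ \lambda_p \|u\|_{\lpx} \leq \|\nabla u\|_{\lpxs} \quad \text{for all } u \in \lpxaz. \]
Equivalently, I want to show that
\[ \lambda_p := \inf\Big\{ \|\nabla u\|_{\lpxs} \,:\, u \in \lpxaz,\ \|u\|_{\lpx}=1 \Big\} > 0. \]

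The key steps are as follows. First, I note that $\lpxaz$ is a linear subspace of the finite-dimensional space $\lpx$, so the set $S = \{u \in \lpxaz : \|u\|_{\lpx}=1\}$ is closed and bounded in $\lpx$, hence compact. Second, the functional $u \mapsto \|\nabla u\|_{\lpxs}$ is continuous on $\lpx$ (it is the composition of a linear map with a norm), so it attains its infimum $\lambda_p$ over $S$ at some $u^* \in S$. Third, I argue that $\lambda_p > 0$ by contradiction: if $\lambda_p = 0$, then $\|\nabla u^*\|_{\lpxs}=0$, which forces $u^*(x_i) = u^*(x_j)$ whenever $w_{ij}>0$. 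Since $G$ is connected, $u^*$ must be constant, say $u^* \equiv c$ for some $c\in \R^k$. Then the mean-zero constraint gives
\[ 0 = (u^*)_{a,X} = \frac{1}{\sum_i a_i} \sum_{i=1}^n a_i c = c, \]
using the assumption $\sum_i a_i > 0$. Hence $u^* \equiv 0$, contradicting $\|u^*\|_{\lpx}=1$. Therefore $\lambda_p > 0$, and the Poincar\'e inequality follows by homogeneity and the reduction to $\lpxaz$.

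I do not expect any serious obstacle. The one subtle point is handling the case where $a$ has some zero entries: one might worry that $(u)_{a,X}$ only measures a partial average, but the argument only requires $\sum_i a_i>0$ to conclude that a constant function with zero $a$-weighted mean must vanish everywhere, which is exactly what is provided. The assumption $p\geq 1$ is used only to ensure that $\|\cdot\|_{\lpx}$ and $\|\cdot\|_{\lpxs}$ are genuine norms making the functionals continuous; the proof does not distinguish between $p=1$ and $p>1$.
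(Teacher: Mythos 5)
Your proof is correct and follows essentially the same compactness argument as the paper: both reduce to the weighted-mean-zero unit sphere, use finite-dimensional compactness to extract a minimizer (the paper via a minimizing sequence and Bolzano--Weierstrass, you via continuity on a compact set), and derive a contradiction from connectedness plus the zero-mean constraint. The difference is purely cosmetic.
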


\begin{proof} 
Define
\[\lambda_p = \min_{\substack{u\in \lpx \\ u\not \equiv (u)_{a,X}}}\frac{\|\nabla u\|_{\lpxs}}{ \|u-(u)_{a,X}\|_{\lpx}}.\]
Then clearly \eqref{eq:Poincare} holds for this choice of $\lambda_p$, and $\lambda_p\geq 0$. If $\lambda_p=0$, then there exists a sequence $u_k\in \lpx$ with $u_k\not\equiv (u_k)_{a,X}$ such that
\[\frac{\|\nabla u_k\|_{\lpxs}}{ \|u_k - (u)_{a,X}\|_{\lpx}}\leq \frac{1}{k}.\]
We may assume that $(u_k)_{a,X}=0$ and $\|u_k\|_{\lpx}=1$, and so 
\begin{equation}\label{eq:gradbound}
\|\nabla u_k\|_{\lpxs}\leq \frac{1}{k}.
\end{equation}
Since $|u_k(x)| \leq \|u_k\|_{\lpx}=1$, the sequence $u_k$ is uniformly bounded and by the Bolzano-Weierstrauss Theorem there exists a subsequence $u_{k_j}$ such that $u_{k_j}(x_i)$ is a convergent sequence in $\R^k$ for every $i$. We denote $u(x_i)=\lim_{j\to\infty}u_{k_j}(x_i)$. Since $\|u_{k_j}\|_{\lpx}=1$ we have $\|u\|_{\lpx}=1$, and thus $u\not \equiv 0$. Similarly, since $(u_k)_{a,X}=0$ we have $(u)_{a,X}=0$ as well.  On the other hand it follows from \eqref{eq:gradbound} that $\|\nabla u\|_{\lpxs}=0$, and so 
\[w_{ij}(u(x_i)-u(x_j)) = 0 \ \ \ \text{ for all }i,j.\]
It follows that $u(x_i)=u(x_j)$ whenever $w_{ij}>0$. Since the graph is connected, it follows that $u$ is constant. Since $(u)_{a,X}=0$ we must have $u\equiv 0$, which is a contradiction, since $\|u\|_{\lpx}=1$. Therefore $\lambda_p>0$, which completes the proof.
\end{proof}

We can now prove Theorem~\ref{thm:exist}.

\begin{proof}[Proof of Theorem~\ref{thm:exist}]
Let us write
\begin{equation}\label{eq:Ip}
I_p(u) =\frac{1}{p} \|\nabla u\|_{\lpxs}^p - \mu \sum_{j=1}^m (y_j - \cons)\cdot u(x_j).
\end{equation}
By Proposition \ref{prop:poincare} we have
\[I_p(u) \geq \frac{1}{p}\lambda_p^p \|u\|_{\lpx}^p - \mu \sum_{j=1}^m (y_j - \cons)\cdot u(x_j)\]
for $u\in \lpxaz$. By H\"older's inequality we have
\begin{align*}
\sum_{j=1}^m (y_j - \cons)\cdot u(x_j)&\leq \sum_{j=1}^m |y_j-\cons||u(x_j)|\\
&\leq \left(\sum_{j=1}^m |y_j-\cons|^q\right)^{1/q}\left(\sum_{j=1}^m |u(x_j)|^p\right)^{1/p}\\
&\leq \left(\sum_{j=1}^m |y_j-\cons|^q\right)^{1/q}\|u\|_{\lpx},
\end{align*}
where $q = p/(p-1)$. Letting $C = \left(\sum_{j=1}^m |y_j-\cons|^q\right)^{1/q}$ we have
\begin{equation}\label{eq:Ipcoercive}
I_p(u) \geq \frac{1}{p}\lambda_p^p \|u\|_{\lpx}^p - C\mu \|u\|_{\lpx}.
\end{equation}
Since $p>1$, we see that $I_p$ is bounded below.

Let $u_k\in \lpxaz$ be a minimizing sequence, that is, we take $u_k$ so that
\[-\infty < \inf_{u\in \lpxaz}I_p(u) = \lim_{k\to \infty}I_p(u_k).\]
By \eqref{eq:Ipcoercive} we have that
\[\frac{1}{p}\lambda_p^p \|u_k\|_{\lpx}^p - C\mu \|u_k\|_{\lpx} \leq \inf_{u\in \lpxaz}I_p(u)+1,\]
for $k$ sufficiently large. Since $p>1$, it follows that there exists $M>0$ such that $\|u_k\|_{\lpx}\leq M$ for all $k\geq 1$. Since $|u_k(x_i)|\leq\|u_k\|_{\lpx}\leq M$ for all $i=1,\dots,n$, we can apply the Bolzano-Weierstrauss Theorem to extract a subsequence $u_{k_j}$ such that $u_{k_j}(x_i)$ is a convergent sequence in $\R^k$ for all $i=1,\dots,n$. We denote by $u^*(x_i)$ the limit of $u_{k_j}(x_i)$ for all $i$. By continuity of $I_p$ we have 
\[\inf_{u\in \lpxaz}I_p(u) = \lim_{j\to \infty}I_p(u_{k_j}) = I_p(u^*),\]
and $(u^*)_{a,X}=0$. This shows that there exists a solution of \eqref{eq:PoissonVar}. 

We now show that any solution of \eqref{eq:PoissonVar} satisfies $-\div\lp|\nabla u|^{p-2}\nabla u\rp = \mu f$. The proof follows from taking a variation. Let $v\in \lpxaz$ and consider $g(t):=I_p(u + tv)$, where $I_p$ is defined in \eqref{eq:Ip}. Then $g$ has a minimum at $t=0$ and hence $g'(0)=0$. We now compute
\begin{align*}
g'(0)&=\frac{d}{dt}\Big\vert_{t=0}\left\{ \frac{1}{p}\|\nabla u + t\nabla v\|_{\lpxs}^p -\mu \sum_{j=1}^m(y_j-\cons)\cdot (u(x_j)+tv(x_j))\right\}\\
&=\frac{1}{2p}\sum_{i,j=1}^nw_{ij}\frac{d}{dt}\Big\vert_{t=0}|\nabla u(x_i,x_j) + t\nabla v(x_i,x_j)|^p - \mu\sum_{j=1}^m(y_j-\cons)\cdot v(x_j)\\
&=\frac{1}{2}\sum_{i,j=1}^nw_{ij}|\nabla u(x_i,x_j)|^{p-2}\nabla u(x_i,x_j)\cdot\nabla v(x_i,x_j) -\mu \sum_{j=1}^m(y_j-\cons)\cdot v(x_j)\\
&=(|\nabla u|^{p-2}\nabla u,\nabla v)_{\lxs} - \mu\sum_{j=1}^m(y_j-\cons)\cdot v(x_j)\\
&=(-\div(|\nabla u|^{p-2}\nabla u),v)_{\lx} - \mu\sum_{j=1}^m(y_j-\cons)\cdot v(x_j)\\
&=(-\div(|\nabla u|^{p-2}\nabla u) - \mu f,v)_{\lx},
\end{align*}
where
\[f(x_i) = \sum_{j=1}^m(y_j-\cons)\delta_{ij}.\]
We choose
\[ v(x_i) = \frac{1}{a_i} \lp -\div\lp|\nabla u|^{p-2}\nabla u\rp(x_i) - \mu f(x_i)\rp \]
then
\[ (v)_{a,X} = \sum_{i=1}^n \lp -\div\lp|\nabla u|^{p-2}\nabla u\rp(x_i) - \mu f(x_i)\rp = 0 \]
so $v\in\lpxaz$.
Moreover, for this choice of $v$,
\[ 0 = g'(0) = \sum_{i=1}^n \frac{1}{a_i} \la \div\lp|\nabla u|^{p-2}\nabla u\rp(x_i) + \mu f(x_i) \ra^2 \geq \frac{1}{\max a_i} \lda \div\lp|\nabla u|^{p-2}\nabla u\rp(x_i) + \mu f(x_i) \rda^2_{\lx}. \]
So, $-\div\lp|\nabla u|^{p-2}\nabla u\rp = \mu f$ as required.

To prove uniqueness, let $u,v\in \lpxaz$ be minimizers of \eqref{eq:PoissonVar}. Then $u$ and $v$ satisfy~\eqref{eq:pPoisson} which we write as
\[ -\div(|\nabla u|^{p-2}\nabla u) =  \mu f. \]
Applying Lemma~\ref{lem:quanterror} (below) we find that $\|u - v\|_{\lpx} =0$ and so $u=v$.
\end{proof}

In the above proof we used a quantitive error estimate which is of interest in its own right. 
The estimate was on equations of the form
\[ -\div(|\nabla u|^{p-2}\nabla u) =  f \]
when $f\in \lpxz$, where we use the notation:
if $a\in\bbR^n$ is a constant vector (without loss of generality the vector of ones) then we write $(u)_{X} = (u)_{a,X} = \frac{1}{n}\sum_{i=1}^n u(x_i)$ and $\lpxz = \{u\in \lpx\,:\,(u)_X=0\}$.

\begin{lemma}
\label{lem:quanterror}
Let $p>1$, $a\in\bbR^n$ be non-negative, and assume $u,v\in \lpxaz$ satisfy
\[ -\div(|\nabla u|^{p-2}\nabla u)(x_i) = f(x_i) \]
and
\[ -\div(|\nabla v|^{p-2}\nabla v)(x_i) = g(x_i) \]
for all $i=1,\dots,n$, where $f,g \in \lpxz$.
Then,
\[ \|u - v\|_{\lpx} \leq \lb \begin{array}{ll} C\lambda_p^{-q}\|f-g\|_{\lqx}^{1/(p-1)} & \text{if } p\geq 2 \\ C\lambda_p^{-2} \lp \|\nabla u\|_{\lpx} + \|\nabla v\|_{\lpx}\rp^{2-p} \| f-g\|_{\lx} & \text{if } 1<p<2 \end{array} \rd \] 
where $C$ is a constant depending only on $p$ and $q=\frac{p}{p-1}$.
\end{lemma}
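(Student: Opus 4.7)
The plan is to subtract the two equations, test against $w := u - v$, and use the well-known pointwise monotonicity inequalities for the nonlinearity $\xi \mapsto |\xi|^{p-2}\xi$. Since $(u)_{a,X}=(v)_{a,X}=0$ we have $w\in \lpxaz$, and using the divergence/gradient duality \eqref{eq:divadjoint} the tested equation gives
\[
\bigl(|\nabla u|^{p-2}\nabla u - |\nabla v|^{p-2}\nabla v,\ \nabla w\bigr)_{\lxs}
=(f-g,\,w)_{\lx}.
\]
The right-hand side I will bound from above using H\"older's inequality together with Proposition~\ref{prop:poincare} (legal because $w\in \lpxaz$), while the left-hand side I will bound from below edge-by-edge by the standard pointwise inequalities
\[
\bigl(|a|^{p-2}a - |b|^{p-2}b\bigr)\cdot (a-b) \geq
\begin{cases} c_p\, |a-b|^p, & p\geq 2,\\[2pt] c_p\,(|a|+|b|)^{p-2}|a-b|^2, & 1<p<2, \end{cases}
\]
with $c_p>0$ depending only on $p$. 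Summing these estimates against the weights $w_{ij}/2$ converts them into a lower bound on the left-hand side in terms of a (possibly weighted) norm of $\nabla w$.

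In the case $p\geq 2$ the chain is short:
\[
c_p\|\nabla w\|_{\lpxs}^p \;\leq\; (f-g,w)_{\lx} \;\leq\; \|f-g\|_{\lqx}\|w\|_{\lpx}\;\leq\; \lambda_p^{-1}\|f-g\|_{\lqx}\|\nabla w\|_{\lpxs}.
\]
Solving for $\|\nabla w\|_{\lpxs}^{p-1}$ and applying Poincar\'e one more time yields
\[
\|w\|_{\lpx} \;\leq\; C\,\lambda_p^{-1-1/(p-1)}\|f-g\|_{\lqx}^{1/(p-1)} \;=\; C\,\lambda_p^{-q}\|f-g\|_{\lqx}^{1/(p-1)},
\]
since $q=1+1/(p-1)$.

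For $1<p<2$ the monotonicity inequality produces a weighted $\ell^2$-type quantity rather than an $\ell^p$ quantity, so an extra H\"older maneuver is needed. Writing
\[
|\nabla w|^p = \left(\frac{|\nabla w|^2}{(|\nabla u|+|\nabla v|)^{2-p}}\right)^{\!p/2} (|\nabla u|+|\nabla v|)^{p(2-p)/2}
\]
and applying H\"older with conjugate exponents $2/p$ and $2/(2-p)$ to the edge-weighted sum gives
\[
\|\nabla w\|_{\lpxs}^2 \;\leq\; C\,\bigl(\|\nabla u\|_{\lpxs}+\|\nabla v\|_{\lpxs}\bigr)^{2-p}\,(f-g,w)_{\lx}.
\]
The pairing is then bounded by Cauchy--Schwarz, the elementary embedding $\|w\|_{\lx}\leq \|w\|_{\lpx}$ (valid for $p\leq 2$ on counting measure), and Poincar\'e, giving $(f-g,w)_{\lx}\leq \lambda_p^{-1}\|f-g\|_{\lx}\|\nabla w\|_{\lpxs}$. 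Cancelling one factor of $\|\nabla w\|_{\lpxs}$ and invoking Poincar\'e a final time delivers the claimed $\lambda_p^{-2}$ bound.

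The main obstacle is the $1<p<2$ regime: the pointwise monotonicity estimate degenerates where $|\nabla u|+|\nabla v|$ is large, so the H\"older interpolation above is essential to recover a bound stated in terms of $\|\nabla u\|_{\lpxs}+\|\nabla v\|_{\lpxs}$. A minor subtlety is that, in order to obtain the clean $\lambda_p^{-2}$ (instead of a mixed $\lambda_p\lambda_2^{-1}$), both Poincar\'e applications must be at exponent $p$; this forces the use of the $\ell^2\hookrightarrow\ell^p$ embedding when pairing $(f-g,w)_{\lx}$, rather than directly applying Poincar\'e at exponent $2$.
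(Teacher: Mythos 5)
Your proposal is correct and follows essentially the same route as the paper's proof: test the difference of the equations against $u-v$, apply the standard monotonicity inequalities for $\xi\mapsto|\xi|^{p-2}\xi$ (strong monotonicity for $p\geq 2$, the degenerate $(|a|+|b|)^{p-2}|a-b|^2$ version with the H\"older interpolation against $\|\nabla u\|_{\lpxs}+\|\nabla v\|_{\lpxs}$ for $1<p<2$), and close with H\"older, the $\ell^2\hookrightarrow\ell^p$ embedding, and Proposition~\ref{prop:poincare}. The only cosmetic difference is that you apply Poincar\'e twice where the paper applies it once and cancels a power of $\|u-v\|_{\lpx}$; the resulting constants and exponents agree.
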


\begin{remark}
If $-\div(|\nabla u|^{p-2}\nabla u) = f$ then we can write $\lp |\nabla u|^{p-2}\nabla u, \nabla \varphi\rp_{\lxs} = (f,\varphi)_{\lx}$ for any $\varphi\in \lx$.
Choosing $\varphi = u$ implies $\|\nabla u\|_{\lpxs}^p = (f,u)_{\lx} \leq \|f\|_{\lqx} \|u\|_{\lpx}$ so we could write the bound for $p\in (1,2)$ in the above lemma without $\|\nabla u\|_{\lpx}$ and $\|\nabla v\|_{\lpx}$ on the right hand side.
\end{remark}

\begin{proof}
For $p\geq 2$ we use the identity
\[|a-b|^p \leq C(|a|^{p-2}a - |b|^{p-2}b)\cdot(a-b)\]
for vectors $a,b\in \R^k$ for some constant $C$ depending only on $p$ (which can be found in Lemma 4.4 Chapter I~\cite{dibenedetto93}) to obtain
\begin{align*}
\|\nabla u - \nabla v\|_{\lpxs}^p&=\frac{1}{2}\sum_{i,j=1}^nw_{ij} |\nabla u(x_i,x_j) - \nabla v(x_i,x_j)|^p\\
&\leq C\hspace{-3pt}\sum_{i,j=1}^n \hspace{-0.9pt} w_{ij} \left(|\nabla u(x_i,x_j)|^{p-2}\nabla u(x_i,x_j) - |\nabla v(x_i,x_j)|^{p-2}\nabla v(x_i,x_j)\right)\hspace{-1.15pt}\cdot\hspace{-1.15pt}\left(\nabla u(x_i,x_j)  - \nabla v(x_i,x_j)\right)\\ 
&=C(|\nabla u|^{p-2}\nabla u - |\nabla v|^{p-2}\nabla v,\nabla (u-v))_{\lxs}\\ 
&=C(-\div(|\nabla u|^{p-2}\nabla u) + \div(|\nabla v|^{p-2}\nabla v),u-v)_{\lx}\\ 
&=C(f-g,u-v)_{\lx}\\ 
&\leq C\|f-g\|_{\lqx}\|u-v\|_{\lpx}, 
\end{align*}
where in the last line we used H\"older's inequality, $\tfrac{1}{p}+\tfrac{1}{q}=1$, and the value of $C$ may change from line-to-line. By Proposition \ref{prop:poincare} we have
\[\lambda_p^p\|u - v\|_{\lpx}^p \leq\|\nabla u - \nabla v\|_{\lpxs}^p \leq C\|f-g\|_{\lqx}\|u-v\|_{\lpx}.\] 
Therefore we deduce
\[ \|u - v\|_{\lpx}  \leq C\lambda_p^{-q}\|f-g\|_{\lqx}^{1/(p-1)}. \] 

Now for $1<p<2$ we follow the proof of Lemma 4.4 in Chapter I~\cite{dibenedetto93} to infer
\begin{align*}
\lp |a|^{p-2}a - |b|^{p-2}b\rp \cdot (a-b) & = \int_0^1 \la sa + (1-s)b\ra^{p-2} |a-b|^2 \, \dd s \\
 & \hspace{1cm} + (p-2) \int_0^1 \la sa + (1-s) b\ra^{p-4} \la \lp sa +(1-s)b\rp \cdot (a-b) \ra^2 \, \dd s
\end{align*}
for any $a,b\in\bbR^k$.
Hence, by the Cauchy Schwarz inequality,
\begin{align*}
\lp |a|^{p-2}a - |b|^{p-2}b\rp \cdot (a-b) & \geq (p-1) \int_0^1 \la sa + (1-s) b\ra^{p-2} |a-b|^2 \, \dd s \\
 & \geq (p-1) |a-b|^2 \int_0^1 \frac{1}{(s|a|+(1-s)|b|)^{2-p}} \, \dd s \\
 & \geq \frac{(p-1) |a-b|^2}{(|a|+|b|)^{2-p}}.
\end{align*}
In the sequel we make use of the inequality
\[ \lp |a|^{p-2}a - |b|^{p-2}b\rp \cdot (a-b) \geq \frac{C |a-b|^2}{(|a|+|b|)^{2-p}}. \]

By H\"older's inequality and the above inequality we have (where again the constant $C$ may change from line-to-line)
\begin{align*}
\lda \nabla u - \nabla v \rda_{\lpxs}^p &=\frac{1}{2}\sum_{i,j=1}^n w_{ij}|\nabla u(x_i,x_j) - \nabla v(x_i,x_j)|^p \\
&\leq \lp \frac12 \sum_{i,j=1}^n \frac{w_{ij}|\nabla u(x_i,x_j) - \nabla v(x_i,x_j)|^2}{(|\nabla u(x_i,x_j)| + |\nabla v(x_i,x_j)|)^{2-p}} \rp^{\frac{p}{2}} \lp \frac12 \sum_{i,j=1}^n w_{ij}\lp |\nabla u(x_i,x_j)| + |\nabla v(x_i,x_j)|\rp^p \rp^{\frac{2-p}{2}} \\
 & \leq C \hspace{-3pt} \lp \hspace{-2pt} \sum_{i,j=1}^n \hspace{-3pt} w_{ij} \hspace{-2pt} \lp |\nabla u(x_i,x_j\hspace{-.5pt})|^{p-2}\nabla u(x_i,x_j\hspace{-.5pt}) \hspace{-2pt} - \hspace{-2pt} |\nabla v(x_i,x_j\hspace{-.5pt})|^{p-2}\nabla v(x_i,x_j\hspace{-.5pt})\rp \hspace{-2pt} \cdot \hspace{-2pt} (\nabla u(x_i,x_j\hspace{-.5pt}) \hspace{-2pt} - \hspace{-2pt} \nabla v(x_i,x_j\hspace{-.5pt})) \hspace{-4pt} \rp^{\hspace{-1.25pt}\frac{p}{2}} \\
 & \hspace{1cm} \times \lp \|\nabla u\|_{\lpxs} + \|\nabla v\|_{\lpxs} \rp^{\frac{(2-p)p}{2}} \\
 & = C \lp |\nabla u|^{p-2} \nabla u - |\nabla v|^{p-2} \nabla v , \nabla (u-v) \rp_{\lxs}^{\frac{p}{2}} \lp \|\nabla u\|_{\lpxs} + \|\nabla v\|_{\lpxs} \rp^{\frac{(2-p)p}{2}} \\
 & = C \lp -\div(|\nabla u|^{p-2} \nabla u) + \div(|\nabla v|^{p-2} \nabla v) , u-v \rp_{\lx}^{\frac{p}{2}} \lp \|\nabla u\|_{\lpxs} + \|\nabla v\|_{\lpxs} \rp^{\frac{(2-p)p}{2}} \\
 & = C \lp f-g , u-v \rp_{\lx}^{\frac{p}{2}} \lp \|\nabla u\|_{\lpxs} + \|\nabla v\|_{\lpxs} \rp^{\frac{(2-p)p}{2}} \\
 & \leq C \lda f - g\rda_{\lx}^{\frac{p}{2}} \lda u-v \rda_{\lx}^{\frac{p}{2}} \lp \|\nabla u\|_{\lpxs} + \|\nabla v\|_{\lpxs} \rp^{\frac{(2-p)p}{2}}.
\end{align*}
Combining the above with Proposition~\ref{prop:poincare} we have
\[ \lambda_p^p \| u-v\|_{\lpx}^{\frac{p}{2}} \leq C \lda f - g\rda_{\lx}^{\frac{p}{2}} \lp \|\nabla u\|_{\lpxs} + \|\nabla v\|_{\lpxs} \rp^{\frac{(2-p)p}{2}} \]
which implies the result.
\end{proof}

The final proof from Section~\ref{sec:variational} is Lemma~\ref{lem:mu}.

\begin{proof}[Proof of Lemma~\ref{lem:mu}]
Let us write
\[I_{p,\mu}(u) = \frac{1}{p}\|\nabla u\|_{\lpxs}^p - \mu \sum_{j=1}^m (y_j - \cons)\cdot u(x_j).\]
We note that
\[I_{p,\mu}(\mu^{1/(p-1)}u) = \mu^{p/(p-1)}I_{p,1}(u).\]
Therefore
\[I_{p,\mu}(u_\mu) = \mu^{p/(p-1)}I_{p,1}(u_\mu \mu^{-1/(p-1)}) \geq \mu^{p/(p-1)}I_{p,1}(u_1).\]
On the other hand
\[\mu^{p/(p-1)}I_{p,1}(u_1) = I_{p,\mu}(\mu^{1/(p-1)}u_1) \geq I_{p,\mu}(u_{\mu})\]
Therefore
\[I_{p,\mu}(\mu^{1/(p-1)}u_1) = I_{p,\mu}(u_{\mu}).\]
By uniqueness in Theorem~\ref{thm:exist} we have $u_{\mu} = \mu^{1/(p-1)}u_1$, which completes the proof.
\end{proof}

\subsection{Proofs for Section~\ref{sec:poissonMBO}}

We now turn our attention to the Ginzburg--Landau approximation of the graph cut problem~\eqref{eq:graphcut}.

\begin{proof}[Proof of Theorem \ref{thm:GLconv}]
Let us redefine $\GL_\tau$ in a more general form,
\[ \GL_\tau(u) = \frac{1}{2}\|\nabla u\|_{\lxs}^2 +
\frac{1}{\tau}\sum_{i=1}^n V(u(x_i)) \]
where $V:\bbR^k\to [0,+\infty)$ is continuous and $V(t)=0$ if and only
if $t \in S_k$.
Of course, the choice of $V(t) = \prod_{j=1}^k |t-\e_j|^2$ satisfies
these assumptions.
We let
\begin{align*}
\cE_\tau(u) & = \lb \begin{array}{ll} \GL_\tau(u) - \mu \sum_{j=1}^m
(y_j-\cons) \cdot u(x_j) & \text{if } (u)_X = b \\ + \infty & \text{else,}
\end{array} \rd \\
\cE_0(u) & = \lb \begin{array}{ll} \frac{1}{2}\|\nabla u\|_{\lxs}^2 - \mu \sum_{j=1}^m (y_j-\cons)
\cdot u(x_j) & \text{if } (u)_X = b \text{ and } u:X\to S_k \\ + \infty
& \text{else.} \end{array} \rd
\end{align*}
The theorem can be restated as showing that minimisers $u_\tau$ of
$\cE_\tau$ contain convergent subsequences, and any convergent
subsequence converges to a minimiser of $\cE_0$.
We divide the proof into two steps, in the first step we show that the
sequence of minimisers $\{u_\tau\}_{\tau>0}$ is precompact, in the
second step we show that any convergent subsequence is converging to a
minimiser of $\cE_0$.

\paragraph{\bf 1. Compactness.}
We first show that any sequence $\{u^\prime_\tau\}_{\tau>0}$ and $M\in
\bbR$ satisfying $\sup_{\tau>0} \cE_\tau(u_\tau^\prime)\leq M$ is
precompact.
By Proposition~\ref{prop:poincare} and the Cauchy–Schwarz inequality
\begin{align*}
M & \geq \frac{\lambda_2^2}{2} \| u^\prime_\tau - b\|_{\lx}^2 +
\underbrace{\frac{1}{\tau} \sum_{i=1}^n V(u_\tau^\prime(x_i))}_{\geq 0}
- \mu \underbrace{\sqrt{\sum_{j=1}^m (y_j-\cons)^2}}_{=:C}
\|u^\prime_\tau\|_{\lx} \\
  & \geq \frac{\lambda_2^2}{2} \| u^\prime_\tau - b\|_{\lx}^2 - C\mu
\|u^\prime_\tau - b\|_{\lx} - C\mu \|b\|_{\lx}.
\end{align*}
Hence,
\[ \|u_\tau^\prime - b\|_{\lx} \leq \frac{C\mu}{\lambda_2^2} \left( 1 +\sqrt{1+\frac{2\lambda_2^2 (M+C\mu\| b\|_{\lx})}{C^2\mu^2}}\right)=:\tilde{C} \]
so $\{\mu_\tau^\prime\}_{\tau>0}$ is bounded in $\lx$ and therefore, by
the Bolzano--Weierstrass Theorem, precompact.

To show that minimisers $\{u_\tau\}_{\tau>0}$ are precompact it is
enough to show that there exists $M\in\bbR$ such that $\sup_{\tau>0}
\cE_\tau(u_\tau)\leq M$.
This follows easily as we take $u\in\lx$ satisfying $\sum_{i=1}^n u(x_i)
= b$ and $u(x_i)\in S_k$ for all $i=1,2,\dots,n$ as a candidate.
We have
\[ \cE_\tau(u_\tau) \leq \cE_\tau(u) = \frac12 \|\nabla u\|_{\lxs}^2 -
\mu\sum_{j=1}^m (y_j-\cons) \cdot u(x_j) =: M. \]
Now we have shown that there exists convergent subsequences we show that
any limit must be a minimiser of $\cE_0$.

\paragraph{\bf 2. Converging Subsequences.}
Let $u_0$ be a cluster point of $\{u_\tau\}_{\tau>0}$, i.e. there exists
a subsequence such that $u_{\tau_m} \to u_0$ as $m\to\infty$.
Pick any $v\in \lx$ with $\cE_0(v)<+\infty$.
We will show
\begin{enumerate}
\item[(a)] $\cE_{\tau}(v) = \cE_0(v)$,
\item[(b)] $\liminf_{\tau\to 0} \cE_\tau(u_\tau) \geq \cE_0(u_0)$.
\end{enumerate}
Assuming (a) and (b) hold then, by (a),
\[ \cE_0(v) = \cE_{\tau_m}(v) \geq \cE_{\tau_m}(u_{\tau_m}). \]
Taking the limit as $m\to\infty$, and applying (b) we have
\[ \cE_0(v) \geq \liminf_{m\to\infty} \cE_{\tau_m}(u_{\tau_m}) \geq
\cE_0(u_0). \]
It follows that for all $v\in \lx$ we have $\cE_0(u_0)\leq \cE_0(v)$,
hence $u_0$ is a minimiser of $\cE_0$.

To show (a), we easily notice that
\[ \cE_{\tau}(v) = \frac12\|\nabla v\|_{\lxs}^2 +
\frac{1}{\tau} \sum_{i=1}^n \underbrace{V(v(x_i))}_{=0} - \mu
\sum_{j=1}^m (y_j-\cons) \cdot v(x_j) = \cE_0(v). \]

For (b) we without loss of generality assume that $u_\tau\to u_0$ and
\[ \liminf_{\tau\to0} \cE_\tau(u_\tau) = \lim_{\tau\to0}
\cE_\tau(u_\tau) <+\infty. \]
As $\sum_{i=1}^n u_{\tau}(x_i) = b$ for all $\tau>0$ and $u_\tau(x_i)
\to u_0(x_i)$ for every $i\in\{1,\dots,n\}$ then $(u_0)_X = \sum_{i=1}^n
u_0(x_i) = b$.
And since $V(u_\tau(x_i))\leq \tau \cE_\tau(u_\tau)\to 0$ then we have
$V(u_0(x_i))=0$, hence $u_0(x_i)\in S_k$.
Now,
\[ \cE_\tau(u_\tau) = \underbrace{\frac12\|\nabla u_\tau\|_{\lxs}^2}_{\to \frac12\|\nabla u_0\|_{\lxs}^2} + \frac{1}{\tau}
\sum_{i=1}^n \underbrace{V(u_\tau(x_i))}_{\geq 0} - \mu
\underbrace{\sum_{j=1}^m (y_j-\cons)\cdot u_\tau(x_j)}_{\to \sum_{j=1}^m
(y_j-\cons)\cdot u_0(x_j)}. \]
So $\liminf_{\tau\to 0} \cE_\tau(u_\tau) \geq \cE_0(u_0)$ as required.
\end{proof}

\begin{remark}
If (a) and (b) in the proof of Theorem~\ref{thm:GLconv} are strengthened
to
\begin{enumerate}
\item[(a$^\prime$)] for all $v\in \lx$ there exists $v_\tau\to v$ such that
$\lim_{\tau\to 0}\cE_{\tau}(v_\tau) = \cE_0(v)$,
\item[(b$^\prime$)] for all $v\in \lx$ and for all $v_\tau\to v$ then
$\liminf_{\tau\to 0} \cE_\tau(v_\tau) \geq \cE_0(v)$
\end{enumerate}
then one says that $\cE_\tau$ $\Gamma$-converges to $\cE_0$ (and one can
show that (a$^\prime$) and (b$^\prime$) hold in our case with a small modification of
the above proof).
The notion of $\Gamma$-convergence is fundamental in the calculus of
variations and is considered the variational form of convergence as it
implies (when combined with a compactness result) the convergence of
minimisers.
\end{remark}

\section{Continuum limits} \label{sec:continuum}

We briefly discuss continuum limits for the Poisson learing problem \eqref{eq:PoissonL}. We take $p=2$ for simplicity, but the arguments extend similarly to other values of $p\geq 1$. In order to analyze continuum limits of graph-based learning algorithms, we make the \emph{manifold assumption}, and assume $G$ is a  random geometric graph sampled from an underlying manifold.  To be precise, we assume the vertices of the graph corresponding to unlabeled points $x_1,\dots,x_n$ are a sequence of \emph{i.i.d.}~random variables drawn from a $d$-dimensional compact, closed, and connected manifold $\M$ embedded in $\R^D$, where $d\ll D$. We assume the probability distribution of the random variables has the form $\dd \mu = \rho \dd \Vol_\M$, where $\Vol_\M$ is the volume form on the manifold, and $\rho$ is a smooth density. For the labeled vertices in the graph, we take a fixed finite set of points $\Gamma \subset \M$. The vertices of the random geometric graph are 
\[X_n:=\{x_1,\dots,x_n\}\cup \Gamma.\]
We define the edge weights in the graph by
\[ w_{xy} = \eta_\eps\left(|x-y|\right), \]
where $\eps>0$ is the length scale on which we connect neighbors, $|x-y|$ is Euclidean distance in $\R^D$, and $\eta:[0,\infty)\to [0,\infty)$ is smooth with compact support, and $\eta_\eps(t) = \frac{1}{\eps^d}\eta\left( \tfrac{t}{\eps} \right)$. We denote the solution of the Poisson learning problem \eqref{eq:PoissonL} for this random geometric graph by $u_{n,\eps}(x)$.

The normalized graph Laplacian is given by
\[\L_{n,\eps} u(x) = \frac{2}{\sigma_\eta n\eps^2}\sum_{y\in X_n}\eta_\eps(|x-y|)(u(x) - u(y)),\]
where $\sigma_\eta = \int_{\R^d}|z_1|^2 \eta(|z|)\, dz$. It is well-known (see, e.g., \cite{hein2007graph}), that $\L_{n,\eps}$ is consistent with the (negative of) the weighted Laplace-Beltrami operator
\[\Delta_\rho:=-\rho^{-1}\div_\M(\rho^2 \nabla_\M u),\]
where $\div_\M$ is the manifold divergence and $\nabla_\M$ is the manifold gradient. We write $\div=\div_\M$ and $\nabla= \nabla_\M$ now for convenience. In particular, for any  $u\in C^3(\M)$ we have
\[|\L_{n,\eps} u(x) - \Delta_\rho u(x)| \leq C(\|u\|_{C^3(\M)}+1)(\lambda + \eps)\]
holds for all $x\in X_n$ with probability at least $1-Cn\exp\left( -cn\eps^{d+2}\lambda^2 \right)$ for any $0 < \lambda \leq 1$, where $C,c>0$ are constants.

Using the normalised graph Laplacian in the Poisson learning problem~\eqref{eq:PoissonL} we write
\begin{equation} \label{eq:NormPoisson}
\L_{n,\eps} u_{n,\eps}(x) = n\sum_{y\in \Gamma}^m(g(y)-\cons)\delta_{x=y} \ \ \ \text{for }x\in X_n,
\end{equation}
where $g(y)\in \R$ denotes the label associated to $y\in \Gamma$ and $\cons = \frac{1}{|\O|}\sum_{x\in \O}g(x)$.
We restrict to the scalar case (binary classification) for now.
Note that the normalisation plays no role in the classification problem~\eqref{eq:labeldec}.
To see what should happen in the continuum, as $n\to \infty$ and $\eps\to 0$, we multiply both sides of \eqref{eq:NormPoisson} by a smooth test function $\varphi\in C^\infty(\M)$, sum over $x\in X$, and divide by $n$ to obtain
\begin{equation}\label{eq:weakgraph}
\frac{1}{n}(\L_{n,\eps}u_{n,\eps},\phi)_{\lx} = \sum_{y\in \O}(g(y)-\cons)\phi(y). 
\end{equation}
Since $\L_{n,\eps}$ is self-adjoint (symmetric), we have
\begin{equation*}
(\L_{n,\eps}u_{n,\eps},\phi)_{\lx} = (u_{n,\eps},\L_{n,\eps}\phi)_{\lx} = (u_{n,\eps},\Delta_\rho \phi)_{\lx} + O\left( (\lambda+\eps)\|u_{n,\eps}\|_{\lonex} \right).
\end{equation*}
We also note that
\[\sum_{y\in \O}(g(y)-\cons)\phi(y) = \int_\M \sum_{y\in \O}(g(y)-\cons)\delta_y(x)\phi(x)\, \dd \Vol_\M(x),\]
where $\delta_y$ is Dirac-Delta distribution centered at $y\in \M$, which has the property that
\[\int_\M \delta_y(x)\phi(x) \, \dd\Vol_\M(x) = \phi(y)\]
for every smooth $\phi\in C^\infty(\M)$. Combining these observations with \eqref{eq:weakgraph} we see that
\[ \frac{1}{n}(u_{n,\eps},\Delta_\rho \phi)_{\lx} + O\left( \frac{(\lambda+\eps)}{n}\|u_{n,\eps}\|_{\lonex} \right) = \int_\M \sum_{y\in \O}(g(y)-\cons)\delta_y(x)\phi(x)\,\dd\Vol_\M(x).\] 
If we can extend $u_{n,\eps}$ to a function on $\M$ in a suitable way, then the law of large numbers would yield
\[\frac{1}{n} (u_{n,\eps},\Delta_\rho \phi)_{\lx} \approx \int_{\M}u_{n,\eps}(x)\rho(x)\Delta_\rho \varphi(x)\, \dd\Vol_\M(x).\]
Hence, if $u_{n,\eps}\to u$ as $n\to \infty$ and $\eps\to 0$ in a sufficiently strong sense, then the function $u:\M\to \R$ would satisfy
\[-\int_{\M}u\,\div(\rho^2 \nabla \varphi)\, \dd\Vol_\M = \int_\M \sum_{y\in \O}(g(y)-\cons)\delta_y(x)\phi(x)\,\dd\Vol_\M(x)\]
for every smooth $\varphi\in C^\infty(\M)$. If $u\in C^2(\M)$, then we can integrate by parts on the left hand side to find that
\[-\int_{\M}\phi\, \div(\rho^2 \nabla u)\, \dd\Vol_\M = \int_\M \sum_{y\in \O}(g(y)-\cons)\delta_y(x)\phi(x)\,\dd\Vol_\M(x)\]
Since $\phi$ is arbitrary, this would show that $u$ is the solution of the Poisson problem 
\begin{equation}\label{eq:PoissonCont}
 -\div\left( \rho^2 \nabla u\right)= \sum_{y\in \O}(g(y)-\cons)\delta_y \ \ \ \text{ on }\M.
\end{equation}
We conjecture that the solutions $u_{n,\eps}$ converge to the solution of \eqref{eq:PoissonCont} as $n\to \infty$ and $\eps\to 0$ with probability one. 

\begin{conjecture}\label{conj:continuum}
Assume $\rho$ is smooth. Assume that $n\to \infty$ and $\eps=\eps_n\to 0$ so that
\[ \lim_{n\to \infty}\frac{n\eps^{d+2}}{\log n} = \infty. \]
Let $u \in C^\infty(\M\setminus \Gamma)$ be the solution of the Poisson equation \eqref{eq:PoissonCont} and $u_{n,\eps}$ solve the graph Poisson problem~\eqref{eq:NormPoisson}. Then with probability one
\[\lim_{n\to \infty}\max_{\substack{x\in X_n\\ \dist(x,\Gamma)>\delta}}\left| u_{n,\eps}(x) - u(x)\right| = 0\]
for all $\delta>0$.
\end{conjecture}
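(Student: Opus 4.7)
The plan is to reduce the conjecture to a standard discrete-to-continuum consistency result for graph Laplacians by subtracting an explicit singular profile at the labeled points. By elliptic regularity for $-\div(\rho^2\nabla\cdot)$, the continuum solution $u$ of \eqref{eq:PoissonCont} is $C^\infty$ on $\M\setminus\Gamma$ and, in geodesic normal coordinates around each $y\in\Gamma$, behaves like the Euclidean fundamental solution: $u(x) \sim c_d(g(y)-\cons)\rho(y)^{-2}|x-y|^{2-d}$ for $d\geq 3$ and logarithmically for $d=2$.

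First I would construct a singular corrector. For each $y\in\Gamma$ pick a smooth cutoff $\chi_y$ supported in a small geodesic ball $B_r(y)$, let $\Psi_y$ be a local fundamental solution of $-\div(\rho^2\nabla\cdot)$ at $y$ built in normal coordinates from the Euclidean one with the correct normalization, and set
\[ \Phi(x) := \sum_{y\in\Gamma}(g(y)-\cons)\chi_y(x)\Psi_y(x). \]
Then $u-\Phi \in C^\infty(\M)$ and satisfies $-\div(\rho^2\nabla(u-\Phi))=\eta$ for an explicit smooth, mean-zero function $\eta$ on $\M$. Restricting $\Phi$ to the graph, set $\phi_{n,\eps} := \Phi|_{X_n}$ and $w_{n,\eps} := u_{n,\eps} - \phi_{n,\eps}$. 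On $X_n$ we then have
\[ \L_{n,\eps} w_{n,\eps}(x) \,=\, n\sum_{y\in\Gamma}(g(y)-\cons)\delta_{x=y} \,-\, \L_{n,\eps}\phi_{n,\eps}(x). \]
The crux is to show that $-\L_{n,\eps}\phi_{n,\eps}$ reproduces the discrete source $n\sum_y(g(y)-\cons)\delta_{x=y}$ at the labeled vertices up to a vanishing error, while away from $\Gamma$ it converges pointwise to $-\Delta_\rho\Phi$ by the pointwise consistency estimate quoted in the excerpt (since $\Phi$ is $C^3$ on compact sets disjoint from $\Gamma$). Near each labeled vertex $y$, the normalization of $\Psi_y$ is chosen so that a Riemann-sum expansion of $\L_{n,\eps}$ acting on the homogeneous profile $|z|^{2-d}$ in normal coordinates (together with Bernstein-type concentration of the empirical measure of $X_n$) matches the source mass $n(g(y)-\cons)$ at $x=y$. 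The outcome is a reformulation
\[ \L_{n,\eps}w_{n,\eps}(x) \,=\, \tilde\eta(x) \,+\, R_{n,\eps}(x), \]
with $\tilde\eta \in C^\infty(\M)$ mean zero and $\|R_{n,\eps}\|_{\ell^\infty(\{\dist(\cdot,\Gamma)>\delta\})}\to 0$ almost surely.

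Once this reduction is in place, I would apply the standard discrete-to-continuum convergence theory for graph Poisson equations with bounded smooth data under the scaling $n\eps^{d+2}/\log n\to\infty$ (proved via either discrete maximum principle arguments or variational $\Gamma$-convergence of the Dirichlet energy along the lines of \cite{calder2018properly,slepcev2019analysis}), concluding that $w_{n,\eps}\to u-\Phi$ uniformly on $\{x\in X_n : \dist(x,\Gamma)>\delta\}$ with probability one. Adding back $\phi_{n,\eps}=\Phi|_{X_n}$ gives $u_{n,\eps}\to u$ uniformly away from $\Gamma$, which is the conjecture. The main obstacle is the matching at the labeled vertices: the pointwise consistency bound $|\L_{n,\eps}u-\Delta_\rho u| \lesssim \|u\|_{C^3}(\lambda+\eps)$ degenerates near $\Gamma$ because $\|\Phi\|_{C^3(B_\eps(y))}\sim \eps^{-d-1}$, so one needs a refined analysis of $\L_{n,\eps}$ applied to the homogeneous singular profile $|x-y|^{2-d}$; testing against smooth functions vanishing near $\Gamma$ yields weak convergence cleanly, but upgrading to uniform convergence away from $\Gamma$ appears to require a discrete De Giorgi--Nash--Moser-type oscillation estimate on the random geometric graph. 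A secondary complication is the case $d=2$, where the logarithmic fundamental solution and its different decay rate force a separate bookkeeping of normalization constants.
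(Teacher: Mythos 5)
This statement is a \emph{conjecture} in the paper: the authors explicitly do not prove it, offering only the heuristic weak-formulation computation in the surrounding text and stating that the delta-mass concentration on the right-hand side ``raises difficult technical problems that will require new insights that are far beyond the scope of this paper.'' Your proposal is therefore not being measured against a proof in the paper, and on its own terms it is a strategy outline rather than a proof: the two steps you yourself flag as obstacles are precisely the open content of the conjecture, and they are not resolved by the surrounding reduction. Concretely, the corrector $\Phi=\sum_{y\in\Gamma}(g(y)-\cons)\chi_y\Psi_y$ is infinite at each $y\in\Gamma$ for $d\geq 2$, so the decomposition $u_{n,\eps}=w_{n,\eps}+\Phi|_{X_n}$ is not even defined at the labeled vertices without first regularizing $\Psi_y$ at scale $\eps$; once you regularize, the claim that $\L_{n,\eps}$ applied to the regularized profile reproduces the mass $n(g(y)-\cons)$ at $x=y$ up to vanishing error is exactly the delicate singular matching the authors identify as the missing ingredient, and ``a Riemann-sum expansion together with Bernstein-type concentration'' does not establish it: the concentration estimates for the empirical measure degenerate at scale $\eps$ around a single point, where the profile varies by factors of order $\eps^{2-d}$ across one bandwidth.

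The second gap is the passage from your reduced equation $\L_{n,\eps}w_{n,\eps}=\tilde\eta+R_{n,\eps}$ to uniform convergence of $w_{n,\eps}$ away from $\Gamma$. The error $R_{n,\eps}$ is only controlled on $\{\dist(\cdot,\Gamma)>\delta\}$; near $\Gamma$ it is large, so you cannot invoke a global well-posedness or maximum-principle result for smooth data. You would need an interior (local) regularity and stability theory for the graph Poisson equation on random geometric graphs --- the discrete De Giorgi--Nash--Moser-type estimate you mention --- which is not available off the shelf in the cited literature and is not supplied here. Testing against smooth functions supported away from $\Gamma$ gives only a weak/distributional statement, which is essentially what the paper's own heuristic already provides. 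So the proposal is a reasonable roadmap, and it correctly identifies where the difficulty lies, but it does not prove the statement; the statement remains a conjecture.
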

The conjecture states that $u_{n,\eps}$ converges to $u$ uniformly as long as one stays a positive distance away from the source points $\Gamma$, where the solution $u$ is singular. We expect the conjecture to be true, since similar results are known to hold when the source term on the right hand side is a smooth function $f$. The fact that the right hand side in \eqref{eq:PoissonCont} is highly singular, involving delta-mass concentration, raises difficult technical problems that will require new insights that are far beyond the scope of this paper.

\begin{remark}
If Conjecture \ref{conj:continuum} is true, it shows that Poisson learning is consistent with a well-posed continuum PDE for arbitrarily low label rates. This is in stark contrast to Laplace learning, which does not have a well-posed continuum limit unless the number of labels grows to $\infty$ as $n\to \infty$ sufficiently fast. This partially explains the superior performance of Poisson learning for low label rate problems.
\label{rem:conj}
\end{remark}

\end{document}